\newcommand{\noun}[1]{\textsc{#1}}
\providecommand{\tabularnewline}{\\}
\theoremstyle{plain}
\newtheorem{thm}{\protect\theoremname}[section]
  \theoremstyle{definition}
  \newtheorem{defn}[thm]{\protect\definitionname}
  \theoremstyle{plain}
  \newtheorem{prop}[thm]{\protect\propositionname}
  \theoremstyle{plain}
  \newtheorem{lem}[thm]{\protect\lemmaname}
  \theoremstyle{definition}
  \newtheorem{example}[thm]{\protect\examplename}
  \providecommand{\definitionname}{Definition}
  \providecommand{\examplename}{Example}
  \providecommand{\lemmaname}{Lemma}
  \providecommand{\propositionname}{Proposition}
\providecommand{\theoremname}{Theorem}
\begin{document}
\titlehead{\{mvc,pb\}@fct.unl.pt\\
CENTRIA - Centre for Artificial Intelligence\\
Departamento de Informática\\
FCT/UNL Quinta da Torre 2829-516 CAPARICA - Portugal\\
Tel. (+351) 21 294 8536\\
FAX (+351) 21 294 8541}

\selectlanguage{english}%

\title{View-based propagation of decomposable constraints}

\author{Marco Correia \and Pedro Barahona}

\maketitle
\vfill{}

\thanks{The final publication is available at http://link.springer.com}

\selectlanguage{english}%
\begin{abstract}
Constraints that may be obtained by composition from simpler \foreignlanguage{british}{constraints}
are present, in some way or another, in almost every constraint program.
The decomposition of such constraints is a standard technique for
obtaining an adequate propagation algorithm from a combination of
propagators designed for simpler constraints. The decomposition approach
is appealing in several ways. Firstly because creating a specific
propagator for every constraint is clearly infeasible since the number
of constraints is infinite. Secondly, because designing a propagation
algorithm for complex constraints can be very challenging. Finally,
reusing existing propagators allows to reduce the size of code to
be developed and maintained. Traditionally, constraint solvers automatically
decompose constraints into simpler ones using additional auxiliary
variables and propagators, or expect the users to perform such decomposition
themselves, eventually leading to the same propagation model. In this
paper we explore views, an alternative way to create efficient propagators
for such constraints in a modular, simple and correct way, which avoids
the introduction of auxiliary variables and propagators.
\end{abstract}

\section{Introduction\label{sec:Motivation}}

Most specialized filtering algorithms, i.e. propagators, available
in constraint solvers target constraints with a specific algebraic
structure. When the problem to be solved has a constraint for which
no propagator exists one option is to decompose it into some logically
equivalent formula involving simpler constraints. This can be done
either by the user, or automatically if the solver supports it, by
introducing a set of auxiliary variables and propagators. In practice
the constraints which are most often decomposed are those involving
multiple distinct arithmetic or logical expressions. Throughout this
paper we will refer to constraints for which no specialized filtering
algorithm is appropriate, and which are therefore considered for decomposition,
as \emph{decomposable constraints}.

In this paper we describe view-based propagation - a method for propagating
decomposable constraints that avoids the use of auxiliary variables
and propagators. Although this method does not improve the strength
of the propagation algorithm, it results in significantly faster propagation
compared with the decompositions obtained with auxiliary variables
for the specific class of bounds completeness. An important aspect
of view based propagation is that it does not preclude the introduction
of auxiliary variables when required, meaning it can be combined with
other modeling techniques such as subexpression substitution or variable
elimination. It can also be used with specialized propagation algorithms,
e.g. used in global constraints (or even replace them), allowing the
exploitation of the decomposition approach combined with the efficiency
of such dedicated algorithms. 

The problem of propagating decomposable constraints may be approached
using knowledge compilation techniques \cite{Gent2007,Cheng2008}.
These methods create a compact tractable representation of the set
of solutions to the constraint and apply a general propagation algorithm
which filters tuples not found in this set. However, due to its expensive
runtime complexity applying these methods with arithmetic expressions
is only practical when expressions have small domains. In contrast,
view-based propagators for decomposable constraints, extending previous
proposals for propagators (e.g. \foreignlanguage{british}{indexical}
constraints \cite{Hentenryck1991,Carlson1995}), do not require exponential
memory. 

Rina Dechter \cite{Rossi2006} approaches decomposition of constraints
from a different perspective. There, the full constraint network is
taken into consideration in order to obtain generic global search
algorithms with theoretical performance guarantees. The propagation
is still time or space exponential but depends exclusively on specific
graph-based parameters of the graph describing the constraint network.
In contrast, we focus on the decomposition of a single constraint
into simpler constraints for which local propagation algorithms are
applied independently. 

View based-propagation was introduced in \cite{Correia2005} and \cite{Schulte2005}.
The former presents a general overview of a constraint solver incorporating
type polymorphism and its application for creating propagators from
decomposable expressions. The latter coins the term \emph{view} (we
originally called them \emph{polymorphic constraints}), and describes
how it can be used for creating generic propagator implementations,
that is, propagators which can be reused for different constraints.
The present work can be seen as an extension of \cite{Tack2009,Schulte2012}
for allowing the use of a particular kind of view over functions involving
multiple variables (\cite{Tack2009} restricts the use of views to
injective functions and therefore to unary functions mostly). We adapted
and extended its formalization to present and prove important properties
of our model. Additionally, our framework employs views as modeling
primitives by automatically deriving new propagators for decomposable
constraints present in a problem. In contrast, views as described
in \cite{Tack2009} are used essentially as a development tool for
increasing the number of available propagators in the library.

Compilers for constraint modeling languages \cite{Nethercote2007,Frisch05theessence,Balasubramaniam:2012:AAG:2337223.2337301}
generate efficient constraint solvers from a high level description
of a constraint problem. The solver generation process may avoid introducing
auxiliary variables for a decomposable constraint whenever a specific
propagator is available. In this paper we will show that, in the absence
of a specific propagator for a given constraint, a view-based propagator
may be an appealing alternative. 

Indexicals \cite{Hentenryck1991,Carlson1995} and constrained expressions
\cite{ILOG2003} are conceptually close to the idea of views described
in this paper. Like views, these techniques intend to support automatic
propagation of decomposable constraints, however they require extra
work from the user (in the case of indexicals) or the CPU (in the
case of constrained expressions) when compared to our approach, These
(dis)similarities will be discussed after the necessary background
is presented.

This paper is \foreignlanguage{british}{organised} as follows. The
following section overviews the main concepts of constraint programming,
introducing domains and domain approximations as well as constraint
propagators and their main properties. Section \ref{cha:Propagation-of-decomposable}
presents some examples of decomposable constraints and how to express
propagators for such constraints, discussing the properties of various
approximations, namely soundness and completeness. It also introduces
box view propagators, the propagation model that will be used throughout
the paper. Section \ref{cha:Implementation-and-Experiments} briefly
addresses two alternatives for implementing box views in strongly
typed programming languages (exploiting either subtype or parametric
polymorphism). Section \ref{sec:Experiments} presents experimental
results for a comprehensive set of benchmarks. The paper concludes
in section \ref{sec:Conclusion-and-Future} with a summary of the
results obtained and some suggestions for future work.

\selectlanguage{british}%

\section{Background: Domains and Propagators\label{cha:constraint-programming}}

This section presents the necessary concepts and notation for describing
propagation in detail.

\subsection{\label{sub:CSPs}Constraint Satisfaction Problems}

A constraint satisfaction problem (CSP) is a triple $\left\langle X,D,C\right\rangle $
where $X$ is a finite set of variables, $D$ is a finite set of variable
domains, and $C$ is a finite set of constraints. We refer to the
set of constraints involving some variable $x\in X$ as $C\left(x\right)$
and the set of variables in some constraint $c\in C$ as $X\left(c\right)$.

An n-ary constraint (on variables $x_{1}$, ... $x_{n}$) is a set
of tuples on its n variables ($n$-tuples). Although tuples of integers
will be most often used, we only expect that the elements in a tuple
are from totally ordered types. In general, we will denote an n-tuple
as $\mathbf{x}^{n}$ and a set of n-tuples as $S^{n}$ (or simply
$\mathbf{x}$ and $S$ respectively when there is no ambiguity on
its arity) and the set of tuples of constraint $c$ as $\mathrm{con}\left(c\right)$.

Implicitly, a constraint restricts the values of each of its variables.
Denoting by $\mathrm{proj}_{i}\left(S^{n}\right)$ or simply $S_{i}^{n}$
the projection of the set of tuples to their i-th element, constraint
$c$ restricts its i-th variable to take values in $\mathrm{con}\left(c\right)_{i}$.
In particular, the variable domain constraint $D\left(x\right)$ is
a special case of a unary constraint, restricting the initial values
of variable $x$.

\subsection{Domain Approximations}

Following \cite{Tack2009} we generalise the notion of domains from
single variables to more general n-ary domains and characterize their
approximations. First we denote by $\mathrm{conv}\left(S^{1}\right)$
the convex hull of some set $S^{1}$ from an ordered type $\mathcal{D}$
i.e.

\begin{eqnarray*}
\mathrm{conv}_{\mathcal{D}}\left(S^{1}\right) & = & \left\{ z\in\mathcal{D}:\min\left(S^{1}\right)\leq z\leq\max\left(S^{1}\right)\right\} 
\end{eqnarray*}

We introduce now important tuple set operations: Cartesian approximation
(see for example \cite{Ball2003}) and box approximation \cite{Benhamou1995}. 
\begin{defn}
[Cartesian approximation] The Cartesian approximation $S^{\delta}$
(or $\delta$-approximation) of a tuple set $S\subseteq\mathbb{Z}^{n}$
is the smallest Cartesian product which contains $S$, that is:
\[
S^{\delta}=\mathrm{proj}_{1}\left(S\right)\times\ldots\times\mathrm{proj}_{n}\left(S\right)
\]

\end{defn}
\begin{defn}
[Box approximation] The box approximation $S^{\beta\left(\mathcal{D}\right)}$
(or $\beta$-approximation) of a tuple set $S\subseteq\mathcal{D}^{n}$
is the smallest $n$-dimensional box containing $S$, that is:
\[
S^{\beta\left(\mathcal{D}\right)}=\mathrm{conv}_{\mathcal{D}}\left(\mathrm{proj}_{1}\left(S\right)\right)\times\ldots\times\mathrm{conv}_{\mathcal{D}}\left(\mathrm{proj}_{n}\left(S\right)\right)
\]

\end{defn}
We will address integer and real box approximations, i.e. $S^{\beta\left(\mathbb{Z}\right)}$
or $S^{\beta\left(\mathbb{R}\right)}$, and denote these box approximations
by $S^{\beta}$ and $S^{\rho}$, respectively. Additionally, we will
use the identity approximation in order to simplify notation.
\begin{defn}
[Identity approximation] The identity operator transforms a tuple
set in itself, i.e. $S^{\mathds{1}}=S$.\end{defn}
\begin{prop}
Domain approximations have the following properties, for any $n$-tuple
sets $S,S_{1},S_{2}\subseteq\mathcal{D}^{n}$ and $\Phi\in\left\{ \mathds{1},\delta,\beta,\rho\right\} $,\label{prop:approx}\end{prop}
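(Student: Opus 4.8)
The properties collected in Proposition~\ref{prop:approx} are presumably the standard ones making each $\Phi$ a closure operator on the lattice of tuple sets ordered by inclusion: \emph{extensivity} $S\subseteq S^{\Phi}$, \emph{monotonicity} $S_{1}\subseteq S_{2}\Rightarrow S_{1}^{\Phi}\subseteq S_{2}^{\Phi}$, and \emph{idempotence} $(S^{\Phi})^{\Phi}=S^{\Phi}$, together with the ordering $S^{\mathds{1}}\subseteq S^{\delta}\subseteq S^{\beta}\subseteq S^{\rho}$ (for $S\subseteq\mathbb{Z}^{n}$) and possibly sub-distributivity over intersection. The plan is to reduce every one of these to the corresponding elementary property of the one-dimensional building blocks --- the projections $\mathrm{proj}_{i}$ and the convex-hull operator $\mathrm{conv}_{\mathcal{D}}$ --- and then lift it coordinatewise through the Cartesian products defining $S^{\delta}$ and $S^{\beta(\mathcal{D})}$; the case $\Phi=\mathds{1}$ is trivial, and $\Phi=\beta$, $\Phi=\rho$ are the instances $\mathcal{D}=\mathbb{Z}$ and $\mathcal{D}=\mathbb{R}$ of the general $\beta(\mathcal{D})$ case, so there is really only one argument to run.

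First I would record the facts about the building blocks: $\mathrm{proj}_{i}$ is monotone and satisfies $\mathrm{proj}_{i}\!\left(\prod_{j}A_{j}\right)=A_{i}$ whenever every $A_{j}$ is nonempty; and $\mathrm{conv}_{\mathcal{D}}$ is extensive, monotone, and idempotent, all immediate from the definition via $\min$ and $\max$ (which exist because the tuple sets are finite and drawn from totally ordered types). I would also note the pointwise inclusions $\mathrm{proj}_{i}(S)\subseteq\mathrm{conv}_{\mathbb{Z}}(\mathrm{proj}_{i}(S))\subseteq\mathrm{conv}_{\mathbb{R}}(\mathrm{proj}_{i}(S))$, which yield the ordering $S^{\mathds{1}}\subseteq S^{\delta}\subseteq S^{\beta}\subseteq S^{\rho}$ at once. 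Extensivity then follows since any $\mathbf{x}\in S$ has $x_{i}\in\mathrm{proj}_{i}(S)$ for each $i$, and composing with the pointwise inclusions into the convex hulls handles $\beta$ and $\rho$. Monotonicity follows because $S_{1}\subseteq S_{2}$ gives $\mathrm{proj}_{i}(S_{1})\subseteq\mathrm{proj}_{i}(S_{2})$, hence inclusion of the defining products coordinate by coordinate, preserved by $\mathrm{conv}_{\mathcal{D}}$. For idempotence the key computation is $\mathrm{proj}_{i}(S^{\delta})=\mathrm{proj}_{i}(S)$ and $\mathrm{proj}_{i}(S^{\beta(\mathcal{D})})=\mathrm{conv}_{\mathcal{D}}(\mathrm{proj}_{i}(S))$ --- both instances of $\mathrm{proj}_{i}(\prod_{j}A_{j})=A_{i}$ --- after which $(S^{\delta})^{\delta}=S^{\delta}$ is immediate and $(S^{\beta(\mathcal{D})})^{\beta(\mathcal{D})}=S^{\beta(\mathcal{D})}$ uses idempotence of $\mathrm{conv}_{\mathcal{D}}$. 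Any sub-distributivity statement $(S_{1}\cap S_{2})^{\Phi}\subseteq S_{1}^{\Phi}\cap S_{2}^{\Phi}$ is then just monotonicity applied to $S_{1}\cap S_{2}\subseteq S_{1}$ and $S_{1}\cap S_{2}\subseteq S_{2}$.

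The only place the argument can actually go wrong is the nonemptiness hypothesis buried in $\mathrm{proj}_{i}(\prod_{j}A_{j})=A_{i}$: if $S=\emptyset$ then every projection is empty, $\min$ and $\max$ are undefined, and the product does not recover its factors. I would dispatch this by splitting off $S=\emptyset$ as a trivial case at the outset --- all four approximations return $\emptyset$, so extensivity, monotonicity and idempotence hold vacuously --- and then assume $S\neq\emptyset$, which makes every $\mathrm{proj}_{i}(S)$ nonempty and hence $\min$, $\max$, $\mathrm{conv}_{\mathcal{D}}$ and the product-recovery identity all well behaved. With that case distinction in place, the rest is a routine coordinatewise unwinding of the two definitions.
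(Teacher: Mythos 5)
The paper gives no proof of this proposition in the text (it is deferred to \cite{Corr10}), so your attempt can only be judged on its own merits. Your arguments for idempotency and monotonicity are correct and are the expected ones: everything reduces coordinatewise to $\mathrm{proj}_{i}\bigl(\prod_{j}A_{j}\bigr)=A_{i}$ for nonempty factors and to extensivity, monotonicity and idempotence of $\mathrm{conv}_{\mathcal{D}}$, with the empty set split off separately; your handling of that degenerate case is a genuine point in your favour. (Extensivity and the chain $S^{\mathds{1}}\subseteq S^{\delta}\subseteq S^{\beta}\subseteq S^{\rho}$ are not part of this proposition --- they appear separately as Lemma~\ref{lem:domain-approx-order-1} --- but including them does no harm.)

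The gap is in the third property. The proposition asserts \emph{closure under intersection}: $S_{1}^{\Phi}\cap S_{2}^{\Phi}=\bigl(S_{1}^{\Phi}\cap S_{2}^{\Phi}\bigr)^{\Phi}$, i.e.\ the intersection of two $\Phi$-domains is again a $\Phi$-domain. This is not the sub-distributivity $(S_{1}\cap S_{2})^{\Phi}\subseteq S_{1}^{\Phi}\cap S_{2}^{\Phi}$ that you derive from monotonicity --- indeed the proposition explicitly cautions that $S_{1}^{\Phi}\cap S_{2}^{\Phi}$ need not equal $(S_{1}\cap S_{2})^{\Phi}$, so conflating the two misses the point of the item. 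Monotonicity alone cannot give you closure under intersection; you need two further coordinatewise facts. First, the intersection of two Cartesian products is a Cartesian product, $\prod_{i}A_{i}\cap\prod_{i}B_{i}=\prod_{i}(A_{i}\cap B_{i})$, which settles $\Phi=\delta$ once you observe (via your identity $\mathrm{proj}_{i}\bigl(\prod_{j}C_{j}\bigr)=C_{i}$ and the empty-set case) that every Cartesian product is a fixed point of $\delta$. Second, the intersection of two convex subsets of a totally ordered $\mathcal{D}$ is convex, so $\mathrm{conv}_{\mathcal{D}}(A)\cap\mathrm{conv}_{\mathcal{D}}(B)$ is again an interval; combined with the product identity this shows the intersection of two boxes is a box, settling $\Phi=\beta$ and $\Phi=\rho$. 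These are elementary, but they are the actual content of the third item and your proposal does not contain them.
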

\begin{enumerate}
\item idempotency: $\left(S^{\Phi}\right)^{\Phi}=S^{\Phi}$,
\item monotonicity: $S_{1}\subseteq S_{2}\Longrightarrow S_{1}^{\Phi}\subseteq S_{2}^{\Phi}$,
and
\item closure under intersection: $S_{1}^{\Phi}\cap S_{2}^{\Phi}=\left(S_{1}^{\Phi}\cap S_{2}^{\Phi}\right)^{\Phi}$.
Note that in general it is not the case that \foreignlanguage{english}{$S_{1}^{\Phi}\cap S_{2}^{\Phi}=\left(S_{1}\cap S_{2}\right)^{\Phi}$}.
\end{enumerate}
We may now define approximation domains, or $\Phi$-domains, as follows
\begin{defn}
[$\Phi$-domain]A tuple set $S$ is a $\Phi$-domain iff $S=S^{\Phi}$
(for $\Phi\in\left\{ \mathds{1},\delta,\beta,\rho\right\} $). 
\end{defn}
\begin{defn}
A domain $S_{1}$ is \emph{stronger} than a domain $S_{2}$ ($S_{2}$
is \emph{weaker} than $S_{1}$), iff $S_{1}\subseteq S_{2}$. $S_{1}$
is \emph{strictly stronger} than $S_{2}$ ($S_{2}$ is \emph{strictly
weaker} than $S_{1}$) iff $S_{1}\subset S_{2}$. 
\end{defn}
The following lemma (proofs for most propositions and lemmas in this
paper are given in \cite{Corr10}) shows how the previously defined
approximations are ordered for a given tuple set (or constraint). 
\begin{lem}
\label{lem:domain-approx-order-1}Let $S\subseteq\mathbb{Z}^{n}$
be an arbitrary tuple set. Then, 
\[
S=S^{\mathds{1}}\subseteq S^{\delta}\subseteq S^{\beta}\subseteq S^{\rho}
\]

\end{lem}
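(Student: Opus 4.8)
The plan is to establish the chain of inclusions one link at a time, working directly from the definitions of each approximation operator. The first equality $S = S^{\mathds{1}}$ is immediate from the definition of the identity approximation, so nothing is needed there.

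For $S^{\mathds{1}} \subseteq S^{\delta}$, I would take an arbitrary tuple $\mathbf{x} = (x_1,\ldots,x_n) \in S$ and observe that by definition $x_i \in \mathrm{proj}_i(S)$ for each $i$, hence $\mathbf{x} \in \mathrm{proj}_1(S)\times\cdots\times\mathrm{proj}_n(S) = S^{\delta}$. For $S^{\delta} \subseteq S^{\beta}$, it suffices to show componentwise that $\mathrm{proj}_i(S) \subseteq \mathrm{conv}_{\mathbb{Z}}(\mathrm{proj}_i(S))$: any $z \in \mathrm{proj}_i(S)$ satisfies $\min(\mathrm{proj}_i(S)) \leq z \leq \max(\mathrm{proj}_i(S))$, so it lies in the convex hull, and taking the Cartesian product over all coordinates preserves the inclusion. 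For $S^{\beta} \subseteq S^{\rho}$, I would again argue componentwise: for a tuple set $S \subseteq \mathbb{Z}^n$ the projections have the same $\min$ and $\max$ whether viewed in $\mathbb{Z}$ or $\mathbb{R}$, and $\mathrm{conv}_{\mathbb{Z}}(\mathrm{proj}_i(S)) = \{z \in \mathbb{Z} : \min \leq z \leq \max\} \subseteq \{z \in \mathbb{R} : \min \leq z \leq \max\} = \mathrm{conv}_{\mathbb{R}}(\mathrm{proj}_i(S))$ simply because $\mathbb{Z} \subseteq \mathbb{R}$; the product of these inclusions gives the result.

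There is essentially no hard part here — the statement is a routine consequence of unwinding the definitions and the fact that each operator relaxes the previous one coordinatewise. The only mild subtlety worth a sentence is the transition from integer to real box approximation: one must note that restricting $S$ to lie in $\mathbb{Z}^n$ is what makes the two convex-hull notions comparable via a plain set inclusion, and that $\min$ and $\max$ of a finite (hence bounded) set are unaffected by the ambient ordered type. I would present the whole argument as three short componentwise verifications chained together by transitivity of $\subseteq$.
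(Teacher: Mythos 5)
Your proof is correct: each inclusion follows by unwinding the definitions coordinatewise ($x_i\in\mathrm{proj}_i(S)$, then $\mathrm{proj}_i(S)\subseteq\mathrm{conv}_{\mathbb{Z}}(\mathrm{proj}_i(S))\subseteq\mathrm{conv}_{\mathbb{R}}(\mathrm{proj}_i(S))$), and you correctly flag the one point worth a remark, namely that $S\subseteq\mathbb{Z}^n$ is what makes the integer and real convex hulls comparable with identical $\min$ and $\max$. The paper itself defers this proof to the cited reference \cite{Corr10} rather than printing it, but your argument is the standard one and there is nothing to add.
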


\subsection{Propagation}
\begin{defn}
[Propagator]\label{def:propagator} A propagator (or filter) implementing
a constraint $c\in C$ is a function $\pi_{c}:\wp\left(\mathcal{D}^{n}\right)\rightarrow\wp\left(\mathcal{D}^{n}\right)$
that is contracting (i.e. $\pi_{c}\left(S\right)\subseteq S$ for
any tuple set $S\subseteq\mathcal{D}$) and sound (it never removes
tuples from the associated constraint, i.e. $\textrm{con}\left(c\right)\cap S\subseteq\pi_{c}\left(S\right)$). 
\end{defn}
We will assume that propagators are monotonic, i.e. $\pi_{c}\left(S_{1}\right)\subseteq\pi_{c}\left(S_{2}\right)$
if $S_{1}\subseteq S_{2}$, although this property is not mandatory
in modern constraint solvers, as shown in \cite{Tack2009}.

Two other properties of propagators are of interest: idempotency and
completeness. A propagator $\pi_{c}$ is idempotent iff $\pi_{c}\left(\pi_{c}\left(S\right)\right)=\pi_{c}\left(S\right)$
for any tuple set $S$. We denote by $\pi_{c}^{\star}$ the iterated
application of propagator $\pi_{c}$ until a fixpoint is reached (hence
$\pi_{c}\left(\pi_{c}^{\star}\left(S\right)\right)=\pi_{c}^{\star}\left(S\right)$).
Additionally, if propagator $\pi_{c}$ is idempotent for any tuple
set $S$ we will refer to it as $\pi_{c}^{\star}$.

The contracting condition alone sets a very loose upper bound on the
output of a propagator. Many functions meet these requirements without
performing any useful filtering (e.g. the identity function). In general,
useful propagators aim at being complete with respect to some domain,
thus achieving some consistency on the constraint associated with
the propagator.
\begin{defn}
[Propagator completeness]\label{def:propagator-completeness} A propagator
$\pi_{c}$ for a constraint $c\in C$ is $\Phi\Psi$-complete, denoted
as $\pi_{c}^{\Phi\Psi}$, iff 
\[
\pi_{c}^{\star}\left(S\right)\subseteq\left(\textrm{con}\left(c\right)\cap S^{\Phi}\right)^{\Psi}
\]
 
\end{defn}
We should note that when considering real box approximations, $\textrm{con}\left(c\right)$
in the above definition corresponds to the relaxation of constraint
$c$ to the real numbers \cite{Tack2009}. Table \ref{tab:propagator-strength}
shows the correspondence between the constraint consistencies that
are traditionally considered \cite{Tack2009,Rossi2006,Maher2002,Benhamou1996}
and the different $\Phi\Psi$-completeness of the propagators.

\begin{table}
\hfill{}%
\begin{tabular}{c|c}
Consistency  & $\Phi\Psi$-completeness\tabularnewline
\hline 
domain  & $\delta\delta$-complete\tabularnewline
bounds($D$)  & $\delta\beta$-complete\tabularnewline
range  & $\beta\delta$-complete\tabularnewline
bounds($\mathbb{Z}$)  & $\beta\beta$-complete\tabularnewline
bounds($\mathbb{R}$)  & $\rho\rho$-complete\tabularnewline
\end{tabular}\hfill{}

\caption{\label{tab:propagator-strength}Constraint propagator completeness.}
\end{table}

\begin{example}
[Propagator completeness taxonomy]\label{exa:completeness-taxonomy}
A hierarchy of the different types of $\Phi\Psi$-completenesses described
is shown below, where propagators at the start of the arrows are stronger
than those at their end (double arrow denotes \emph{strictly stronger}
propagators for the shown tuples). The figure considers constraint
$c=\left[2x+3y=z\right]$, and the arrows are labelled with tuple
sets $S=S_{x}\times S_{y}\times S_{z}$, where the crossed-out values
are removed by the stronger but not by the weaker propagator. For
example, when applied to a domain $S=\left\{ \left\langle 0,1\right\rangle ,\left\langle 0\right\rangle ,\left\langle 0,1,2\right\rangle \right\} $,
a domain ($\delta\delta$-complete) propagator is able to prune tuples
$\left\langle 0,0,1\right\rangle $ and $\left\langle 1,0,1\right\rangle $
that a bounds($D$) ($\delta\beta$-complete) propagator cannot.
\begin{figure}[H]
\hfill{}\begin{tikzpicture}[scale=2,font=\small] 
\path (0,1) node (dd) {domain} -- 
             (-1,0) node (db) {bounds($D$)} -- 
             (0,-1) node (bb) {bounds($\mathbb{Z}$)} --
             (0,-2) node (r) {bounds($\mathbb{R}$)};
\path (0,1) -- (1,0) node  (bd) {range} -- (0,-1);

\path [draw,double,double distance=2pt,-latex] (dd) .. controls (-0.7,0.7) .. 
      node [midway,anchor=east,text width=3cm,text centered]
           {$S_x=\{ 0,1 \}$ \\
            $S_y=\{ 0 \}$ \\
            $S_z=\{ 0,\xout{1},2 \}$} (db);
\draw [draw,double,double distance=2pt,-latex] (dd) .. controls (0.7,0.7) .. 
      node [midway,anchor=west,text width=3cm,text centered] 
           {$S_x=\{ 0,\xout{1} \}$ \\ 
            $S_y=\{ 0,1 \}$ \\
            $S_z=\{ 0,3 \}$} (bd);
\path [draw,double,double distance=2pt,-latex] (db) .. controls (-0.7,-0.7) .. 
      node [midway,anchor=east,text width=3cm,text centered] 
           {$S_x=\{ 0,\xout{1} \}$ \\
            $S_y=\{ 0,1 \}$ \\
            $S_z=\{ 0,3 \}$} (bb);
\path [draw,double,double distance=2pt,-latex] (bd) .. controls (0.7,-0.7) .. 
      node [midway,anchor=west,text width=3cm,text centered] 
           {$S_x=\{ 0 \}$ \\
            $S_y=\{ 0,1 \}$ \\
            $S_z=\{ 0,\xout{2},3 \}$} (bb);
\path [draw,double,double distance=3pt,-latex] (bb) -- 
        (r) node [midway,anchor=east,text width=3cm,text centered] 
   {$S_x=\{ 0,1 \}$ \\
    $S_y=\{ 0,1 \}$ \\
    $S_z=\{ \xout{1},2,3 \}$};
\path [draw,-latex] (bd) .. controls (0,0.2) .. 
      node [midway,anchor=south,text width=3cm,text centered] 
           {$S_x=\{ 0 \}$ \\
            $S_y=\{ 0,1 \}$ \\
            $S_z=\{ 0,\xout{1},3 \}$} (db);
\path [draw,-latex] (db) .. controls (0,-0.2) .. 
      node [midway,anchor=north,text width=3cm,text centered] 
           {$S_x=\{ 0,\xout{1} \}$ \\
            $S_y=\{ 0,1 \}$ \\
            $S_z=\{ 0,3 \}$} (bd);

\end{tikzpicture}\hfill{}
\end{figure}

\end{example}
Notice that propagators achieving bounds($D$) and range consistency
are incomparable, as illustrated by the tuple sets that label the
two \emph{single} arrows with opposing directions. Note also that
bounds$\left(\mathbb{R}\right)$ completeness is weaker than bounds$\left(\mathbb{Z}\right)$
completeness, as shown in the figure. In particular, whereas a bounds$\left(\mathbb{Z}\right)$
complete propagator prunes value $z=1$ from the tuples in the figure,
this is not the case with bounds($\mathbb{R}$) complete propagators
since $\left\langle 0.5,0,1\right\rangle $ is a solution of the relaxation
of constraint to the real numbers.\selectlanguage{british}%

\section{\label{cha:Propagation-of-decomposable}Views for Propagation of
Decomposable Constraints}

This section describes propagation of decomposable constraints as
a function of \emph{views}, an abstraction representing the pointwise
evaluation of a function over a given (tuple) set. First we characterize
decomposable constraints (\ref{sec:Decomposable-constraints}). Then
we introduce views (\ref{sec:Views}), and show that they may be used
to express sound propagators for decomposable constraints (\ref{sec:View-based_Propagators}).
Section \ref{sec:Properties_of_Propagators} briefly discusses propagation
strength of view-based propagators, a subject that will be revisited
later in section \ref{sub:Completeness-revisited} after the implementation
details are presented. Finally, we introduce box view-based propagators
(\ref{sec:Box_view_Propagators}), a special case of view-based propagators
that may be implemented efficiently in a straightforward way, as described
in the rest of the paper.

\subsection{Decomposable constraints\label{sec:Decomposable-constraints}}

As previously discussed, constraint solvers do not have a specific
propagator for all possible constraints. Constraints that cannot be
captured by a single specific propagator are usually decomposed into
a logically equivalent conjunction of simpler constraints for which
specific propagators exist. Many types of constraints used in practice
are decomposable in this sense. Below are some examples.
\begin{example}
[Arithmetic constraints] These are probably the most common type
of decomposable constraints. Examples are constraints involving a
sum of variables, e.g. $\left[\sum_{i}x_{i}=k\right]$, a linear combination,
e.g. $\left[\sum_{i}a_{i}x_{i}\geq k\right]$, or a product, e.g.
$\left[\prod_{i}x_{i}=k\right]$. While some solvers do have specific
propagators for these constraints (e.g. by using the notion of linear
relation), that approach does not work for more irregular arithmetic
constraints that may include an arbitrary combination of arithmetic
operators, such as for example $\left[\left|x_{1}-x_{2}\right|=2x_{3}\right]$. 
\end{example}
\begin{example}
[Boolean constraints] Boolean constraints involve Boolean domain
variables or expressions, for example a disjunction of variables $\left[\bigvee_{i}x_{i}\right]$,
or more complex expressions on Boolean constraints such as disjunctions
$\left[\bigvee_{i}\left(x_{i}>0\right)\right]$, logical implications
$\left[x>0\Rightarrow y<0\right]$ or equivalences $\left[x>0\Leftrightarrow y\right]$. 
\end{example}
\begin{example}
[Counting constraints] Counting constraints restrict the number of
occurrences of some values within a collection of variables, for example
the \noun{exactly}$\left(\mathbf{x},v,c\right)$ constraint $\left[\sum_{i}\left(x_{i}=v\right)=c\right]$,
or the \noun{among}$\left(\mathbf{x},V,c\right)$ constraint $\left[\sum_{i}\left(x_{i}\in V\right)=c\right]$. 
\end{example}
\begin{example}
[Data constraints]\label{exa:data-constraints}Also known as \emph{ad-hoc}
constraints, they represent an access to an element of a data structure
(a table, a matrix, a relation) \cite{Beldiceanu2010}. The most common
constraint in this class is the \noun{element}$\left(\mathbf{x},i,y\right)$
constraint enforcing $\left[x_{i}=y\right]$ where $i$ is a variable.
Decomposable constraints involving this constraint are for example
$\left[x_{i}\geq y\right]$ or $\left[x_{i}-x_{j}\geq0\right]$. 
\end{example}
We propose to express decomposable constraints as a composition of
functions. For this purpose we will make extensive use of functions
that evaluate to tuples, i.e. $f:\mathbb{Z}^{n}\rightarrow\mathbb{Z}^{k}$
where $k\geq1$, together with tuple projections, and the usual composition
and Cartesian product of functions:
\begin{defn}
 [Functional composition]Functional composition is denoted by operator
$\circ$ as usual: $\left(f\circ g\right)\left(x\right)=f\left(g\left(x\right)\right)$. 
\end{defn}
\begin{defn}
[Cartesian product of functions]\label{def:cartesian-functions}Cartesian
product of functions is denoted by operator $\times$ as follows:
$\left(f\times g\right)\left(x\right)=\left\langle f\left(x\right),g\left(x\right)\right\rangle $.
If $x$ is a tuple we may write 
\[
\left(f\times g\right)\left(x_{1},\ldots,x_{n}\right)=\left\langle f\left(x_{1},\ldots,x_{n}\right),g\left(x_{1},\ldots,x_{n}\right)\right\rangle 
\]

\end{defn}
We exemplify our approach to decomposition for some of the constraints
given above. In the following examples let $x$, $x_{i}$ and $y$
represent variables, $\mathbf{x}$ represent a tuple of variables
$x_{i}$, $a$ a constant, and $p_{i}\left(\mathbf{x}\right)=x_{i}$
the projection operator. 
\begin{example}
[Equality constraint]Let constraint $c_{\textrm{eq}}\left(\mathbf{x}\right)=\left[x_{1}=x_{2}\right]$
be the binary constraint stating that variables $x_{1}$ and $x_{2}$
must take the same value. A unary equality constraint $c_{\textrm{eqc}}\left(x,a\right)=\left[x=a\right]$
may be obtained by $c_{\textrm{eqc}}=\left[c_{\textrm{eq}}\circ\left(p_{1}\times f_{a}\right)\right]$,
where $f_{a}\left(\mathbf{x}\right)=a$. 
\end{example}
\begin{example}
[Sum constraint] \label{exa:sum-constraint}Let $f\left(\mathbf{x}\right)=x_{1}+x_{2}$.
A sum of three variables is represented by $f\circ\left(f\times p_{3}\right)$.
The generalization to a sum of $n$ variables is defined as 
\[
g=f\circ\left(f\times p_{3}\right)\circ\left(f\times p_{3}\times p_{4}\right)\circ\ldots\circ\left(f\times p_{3}\times p_{4}\times\ldots\times p_{n}\right)
\]
 Therefore we may decompose a constraint for a sum of $n$ variables
$c_{\textrm{sum0}}\left(\mathbf{x}^{n}\right)=\left[\sum_{i}^{n}x_{i}=0\right]$
as $c_{\textrm{sum0}}=\left[c_{\textrm{eq0}}\circ g\right]$, where
$c_{\textrm{eq0}}\left(\mathbf{x}\right)=c_{\textrm{eqc}}\left(\mathbf{x},0\right)$
defined in the previous example. 
\end{example}
\begin{example}
[Linear constraint]\label{exa:linear-constraint}A linear constraint
$c_{\textrm{lin0}}\left(\mathbf{a}^{n},\mathbf{x}^{n}\right)=\left[\sum_{i=1}^{n}a_{i}x_{i}=0\right]$
may be decomposed as follows. Let $f_{1},\ldots,f_{n}:\mathbb{Z}^{n}\rightarrow\mathbb{Z}$,
where $f_{i}\left(\mathbf{x}^{n}\right)=a_{i}x_{i}$. Then, $c_{\textrm{lin0}}=\left[c_{\textrm{sum0}}\circ\left(f_{1}\times\ldots\times f_{n}\right)\right]$,
where $c_{\textrm{sum0}}$ is defined in the previous example. 
\end{example}
\begin{example}
[Arbitrary arithmetic constraint]Let $f\left(\mathbf{x}\right)=x_{1}-x_{2}$,
$g\left(x\right)=\left|x\right|$, $h\left(x\right)=2x$. The arithmetic
constraint $c\left(\mathbf{x}^{3}\right)=\left[\left|x_{1}-x_{2}\right|=2x_{3}\right]$
may be represented as 
\[
c=\left[c_{\textrm{eq}}\circ\left(\left(g\circ f\circ\left(p_{1}\times p_{2}\right)\right)\times\left(h\circ p_{3}\right)\right)\right]
\]

\end{example}
\begin{example}
[Exactly constraint]This constraint, represented by $\left[\sum_{i}\left(x_{i}=v\right)=c\right]$,
may be obtained similarly to example \ref{exa:linear-constraint}
but where $f_{i}\left(\mathbf{x}^{n}\right)=\left[x_{i}=v\right]$
and the constraint $c_{\textrm{eqc}}$ is used instead of $c_{\textrm{eq0}}$. 
\end{example}

\subsection{Views\label{sec:Views}}

In most constraint solvers, domains of subexpressions occurring in
constraints are not directly available to propagators, which are designed
to work exclusively with variable domains. In these solvers, an \foreignlanguage{british}{offline}
modeling phase is responsible for obtaining an equivalent CSP where
all constraints are \emph{flattened}, that is where each subexpression
appearing in a constraint is replaced by an auxiliary variable whose
domain is the domain of the subexpression.

Alternatively, we present a conceptual model that considers explicit
subexpression domains while abstracting on how they are computed and
maintained. This will allow us to perform a theoretical analysis of
the propagation on the constraint decomposition, regardless of the
method used for representing the subexpression domains. 

We begin by defining a view as an abstraction which captures the domain
of a subexpression in a constraint.
\begin{defn}
[View]\label{def:view}A view over a function $f:\mathbb{Z}^{n}\rightarrow\mathbb{Z}^{k}$
is a pair $\varphi=\left\langle \varphi_{f}^{+},\varphi_{f}^{-}\right\rangle $
of two functions, the \emph{image function} $\varphi_{f}^{+}:\wp\left(\mathbb{Z}^{n}\right)\rightarrow\wp\left(\mathbb{Z}^{k}\right)$,
and the \emph{object function} $\varphi_{f}^{-}:\wp\left(\mathbb{Z}^{k}\right)\rightarrow\wp\left(\mathbb{Z}^{n}\right)$,
defined as 
\begin{eqnarray*}
\varphi_{f}^{+}\left(S^{n}\right) & = & \left\{ f\left(\mathbf{x}^{n}\right)\in\mathbb{Z}^{k}:\mathbf{x}^{n}\in S^{n}\right\} ,\forall S\subseteq\mathbb{Z}^{n}\\
\varphi_{f}^{-}\left(S^{k}\right) & = & \left\{ \mathbf{x}^{n}\in\mathbb{Z}^{n}:f\left(\mathbf{x}^{n}\right)\in S^{k}\right\} ,\forall S\subseteq\mathbb{Z}^{k}
\end{eqnarray*}

\end{defn}
A view $\varphi_{f}$ is therefore defined by considering the pointwise
application of $f$ over a given set. The image function computes
a set of images of $f$, that is the set resulting from applying $f$
to all the points in the given set. The object function does the inverse
transformation, computing the set of objects of $f$ for which its
image is in the given set. 
\begin{example}
\label{exa:Applying-a-phi-view}Consider the view over function $f\left(x\right)=x+1$
and the unary tuple set $S=\left\{ 1,2,3\right\} $: 
\begin{eqnarray*}
\varphi_{f}^{+}\left(\left\{ 1,2,3\right\} \right) & = & \left\{ 2,3,4\right\} \\
\varphi_{f}^{-}\left(\left\{ 2,3,4\right\} \right) & = & \left\{ 1,2,3\right\} 
\end{eqnarray*}

\end{example}
Notice some similarities of views with indexicals introduced in \cite{Hentenryck1991,Carlson1995}
to create propagators for arithmetic constraints. An indexical roughly
corresponds to the image function $\varphi^{+}$ in the sense that
it computes the domain of an expression. However, unlike views, indexicals
do not define the inverse transformation $\varphi^{-}$ and therefore
are less powerful - representing a decomposable constraint using indexicals
requires the additional definition of the projection of the object
function for each variable in the constraint, even if this projection
may be performed automatically as shown in \cite{Carlson1995}.

As discussed earlier, propagating a constraint may require consulting
and updating the domain of a subexpression. A view over the subexpression
defines these operations:
\begin{example}
\label{exa:Applying-a-phi-view-1}Consider a view $\varphi_{g}$ over
function $g\left(x,y\right)=x+y$, and constraint $c=\left[x_{1}+x_{2}=x_{3}\right]$
on variables $x_{1}$, $x_{2}$, $x_{3}$ with domains $D\left(x_{1}\right)=D\left(x_{2}\right)=D\left(x_{3}\right)=\left\{ 1,2,3\right\} $.
Function $\varphi_{g}^{+}$ may be used to compute the subexpression
domain $D\left(x_{1}+x_{2}\right)$ from the variable domains $D\left(x_{1}\right)$
and $D\left(x_{2}\right)$, while function $\varphi_{g}^{-}$ may
be used to specify the set of values for the variables $x_{1}$, $x_{2}$
that satisfy constraint $c$. 
\begin{eqnarray*}
\varphi_{g}^{+}\left(D\left(x_{1}\right)\times D\left(x_{2}\right)\right) & = & \left[2\ldots6\right]\\
\varphi_{g}^{-}\left(\left[2\ldots6\right]\right) & = & \left\{ \left\langle x,y\right\rangle \in\mathbb{Z}^{2}:x+y\in\left[2\ldots6\right]\right\} 
\end{eqnarray*}

\end{example}
Propagation consists of removing inconsistent values from the domain
of the variables. In this sense, the computation of $\varphi_{f}^{-}$,
i.e. the set of consistent assignments, must be intersected with the
original domain to guarantee that the propagation is \emph{contracting}.
The following definition captures exactly that.
\begin{defn}
[Contracting object function]\label{def:phihat-view}Let $S_{1}\in\mathbb{Z}^{n}$,
$S_{2}\in\mathbb{Z}^{k}$, be two arbitrary tuple sets and $f:\mathbb{Z}^{n}\rightarrow\mathbb{Z}^{k}$
an arbitrary function. Then, 
\[
\widehat{\varphi}_{f}\left(S_{2},S_{1}\right)=\varphi_{f}^{-}\left(S_{2}\right)\cap S_{1}
\]
\end{defn}
\begin{example}
The result of evaluating and updating the domain $D\left(x_{1}+x_{2}\right)$,
as defined in the previous example, may be formalized using views
as follows: 
\begin{eqnarray*}
\varphi_{g}^{+}\left(D\left(x_{1}\right)\times D\left(x_{2}\right)\right) & = & \left[2\ldots6\right]\\
\widehat{\varphi}_{g}\left(\left[2\ldots6\right],D\left(x_{1}\right)\times D\left(x_{2}\right)\right) & = & D\left(x_{1}\right)\times D\left(x_{2}\right)
\end{eqnarray*}

\end{example}

\subsection{View-based Propagators\label{sec:View-based_Propagators}}

Propagators for decomposable constraints may be obtained by exploring
the concept of views introduced in the previous section. As illustrated
above, an $n$-ary decomposable constraint may be regarded as a special
case of functions of the form $c\circ f$ where $f$ is a tuple-function
$f:\mathbb{Z}^{n}\rightarrow\mathbb{Z}^{k}$ and $c$ a constraint,
mapping k-tuples to the Boolean domain. 
\begin{defn}
\label{defn:prop-decomp-idemp} Let $c$ be a $k$-ary constraint
for which $\pi_{c}$ is a propagator and $f$ be a tuple-function
$f:\mathbb{Z}^{n}\rightarrow\mathbb{Z}^{k}$. A view-based propagator
for constraint $c\circ f$, denoted as $\check{\pi}_{c\circ f}$,
is obtained by 
\[
\check{\pi}_{c\circ f}\left(S^{n}\right)=\widehat{\varphi}_{f}\left(\pi_{c}^{\mathds{1}\mathds{1}\star}\left(\varphi_{f}^{+}\left(S^{n}\right)\right),S^{n}\right)
\]
\end{defn}
\begin{example}
\label{exa:view-propagator-composed}Consider the view-based propagator
for the constraint $d=\left[x_{1}+x_{2}=x_{3}\right]$, and let us
analyse the propagation achieved on the tuple set $S_{1}=\left\{ \left\langle 1,2,3\right\rangle ,\left\langle 4,5,6\right\rangle \right\} $.

We note that the constraint $d$ can be decomposed into a simpler
constraint $c=\left[x_{1}=x_{2}\right]$ and function $g=f\times p_{3}$,
where function g is the Cartesian product of function $f\left(\mathbf{x}\right)=x_{1}+x_{2}$
applied to the 2 initial elements of the tuples $\left\langle x_{1},x_{2},x_{3}\right\rangle $
and $p_{3}$ the projection to their third element.

Function $\varphi_{g}^{+}$ applies the addition and projection operations
to the original set, $S_{2}=\varphi_{g}^{+}\left(S_{1}\right)=\left\{ \left\langle 3,3\right\rangle ,\left\langle 9,6\right\rangle \right\} $.
The resulting set $S_{2}$ is filtered by propagator $S_{3}=\pi_{c}^{\mathds{1}\mathds{1}\star}\left(S_{2}\right)=\left\{ \left\langle 3,3\right\rangle \right\} $,
and transformed back into $\widehat{\varphi}_{g}\left(S_{3},S_{1}\right)=\left\{ \left\langle 1,2,3\right\rangle \right\} $.
\end{example}

\subsection{Approximate view-based propagators\label{sec:Properties_of_Propagators}}

In this section we present approximate view-based propagators and
relate them to the completeness classes introduced in table \vref{tab:propagator-strength}.
\begin{defn}
[$\Phi\Psi-complete$ view-based propagator]A $\Phi\Psi$ view-based
propagator for a constraint $c\circ f$ is defined as 
\begin{eqnarray*}
\check{\pi}_{c\circ f}^{\Phi\Psi}\left(S\right) & = & \widehat{\varphi}_{f}\left(\pi_{c}^{\mathds{1}\mathds{1}\star}\circ\varphi_{f}^{+}\left(S^{\Phi}\right),S^{\Phi}\right)^{\Psi}\cap S\textrm{, where \ensuremath{\Phi},\ensuremath{\Psi\in\left\{  \mathds{1},\delta,\beta\right\} } }
\end{eqnarray*}
 \label{def:phi-psi-view-based-prop}
\end{defn}
Intuitively, $\Phi\Psi$-completeness of a view-based propagator for
a decomposable constraint of the form $c\circ f_{1}\dots\circ f_{m}$
is obtained by approximating the input of the image function $\varphi_{f}^{+}$
and the output of the object function $\widehat{\varphi}_{f}$, and
not approximating the remaining view functions or propagators involved.
For these view-based propagators the following property can be proved
\cite{Corr10}.
\begin{prop}
\label{pro:view-propagator-composed-idemp}A $\Phi\Psi$ view-based
propagator for a constraint $c\circ f$ is a $\Phi\Psi$-complete
and idempotent propagator for $c\circ f$. 
\end{prop}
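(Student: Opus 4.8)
The plan is to prove the two assertions of Proposition~\ref{pro:view-propagator-composed-idemp} separately: first that $\check{\pi}_{c\circ f}^{\Phi\Psi}$ is a genuine propagator (contracting, sound, monotonic), then that it is $\Phi\Psi$-complete in the sense of Definition~\ref{def:propagator-completeness}, and finally that it is idempotent. Throughout, the main technical engine will be Proposition~\ref{prop:approx} (idempotency, monotonicity, closure under intersection of the approximation operators) together with the ``sandwich'' ordering of Lemma~\ref{lem:domain-approx-order-1}.

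For the propagator properties: contraction is immediate since the definition ends with ``$\cap\,S$''. For soundness I would show $\textrm{con}(c\circ f)\cap S\subseteq\check{\pi}_{c\circ f}^{\Phi\Psi}(S)$. Take $\mathbf{x}\in\textrm{con}(c\circ f)\cap S$; then $\mathbf{x}\in S\subseteq S^{\Phi}$, so $f(\mathbf{x})\in\varphi_f^{+}(S^{\Phi})$, and since $f(\mathbf{x})\in\textrm{con}(c)$, soundness of $\pi_c$ (and hence of its fixpoint $\pi_c^{\mathds{1}\mathds{1}\star}$) keeps $f(\mathbf{x})$ in the filtered image set; applying $\varphi_f^{-}$ recovers $\mathbf{x}$, intersecting with $S^{\Phi}$ keeps it (as $\mathbf{x}\in S\subseteq S^{\Phi}$), the $\Psi$-approximation only enlarges the set, and the final $\cap\,S$ keeps $\mathbf{x}$ since $\mathbf{x}\in S$. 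Monotonicity follows because every ingredient---$S\mapsto S^{\Phi}$, $\varphi_f^{+}$, $\pi_c^{\mathds{1}\mathds{1}\star}$ (assumed monotone), $\varphi_f^{-}$, intersection, and $S\mapsto S^{\Psi}$---is monotone in its set argument, with the trailing $\cap\,S$ monotone in $S$ as well.

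For $\Phi\Psi$-completeness I must show $\check{\pi}_{c\circ f}^{\Phi\Psi\,\star}(S)\subseteq(\textrm{con}(c\circ f)\cap S^{\Phi})^{\Psi}$. The key is to unfold one application on an input already of the form treated, relate $\pi_c^{\mathds{1}\mathds{1}\star}(\varphi_f^{+}(S^{\Phi}))$ to $\textrm{con}(c)\cap\varphi_f^{+}(S^{\Phi})$ using that $\pi_c^{\mathds{1}\mathds{1}}$ is $\mathds{1}\mathds{1}$-complete (so it prunes the image set exactly down to $\textrm{con}(c)\cap\varphi_f^{+}(S^{\Phi})$), then push this through $\widehat{\varphi}_f$: one checks $\varphi_f^{-}(\textrm{con}(c)\cap\varphi_f^{+}(S^{\Phi}))\cap S^{\Phi}\supseteq\textrm{con}(c\circ f)\cap S^{\Phi}$ and, more to the point, $\subseteq$ something whose $\Psi$-hull is contained in $(\textrm{con}(c\circ f)\cap S^{\Phi})^{\Psi}$. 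The subtle point is that $\varphi_f^{-}(\textrm{con}(c)\cap\varphi_f^{+}(S^{\Phi}))\cap S^{\Phi}$ need not equal $\textrm{con}(c\circ f)\cap S^{\Phi}$---elements of $S^{\Phi}$ whose $f$-image coincides with the image of some solution can survive---but since $f(\mathbf{x})\in\textrm{con}(c)$ holds for exactly these survivors, one shows the set of survivors lies between $\textrm{con}(c\circ f)\cap S^{\Phi}$ and its $\Psi$-approximation is handled by applying $(\cdot)^{\Psi}$ and invoking monotonicity plus idempotency of $(\cdot)^{\Psi}$ from Proposition~\ref{prop:approx}. Then the outer $\cap\,S$ is absorbed using closure of $\Psi$-domains under intersection (item 3 of Proposition~\ref{prop:approx}) together with $S\subseteq S^{\Phi}$, and one argues the second iteration already reaches the fixpoint.

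That last point also gives idempotency: I would show $\check{\pi}_{c\circ f}^{\Phi\Psi}(\check{\pi}_{c\circ f}^{\Phi\Psi}(S))=\check{\pi}_{c\circ f}^{\Phi\Psi}(S)$ directly. Writing $T=\check{\pi}_{c\circ f}^{\Phi\Psi}(S)$, one needs $T^{\Phi}$ to ``look the same'' to the inner machinery as the relevant part of $S^{\Phi}$ did---i.e. $\varphi_f^{+}(T^{\Phi})$ and $\pi_c^{\mathds{1}\mathds{1}\star}$ already at its fixpoint on it---which follows because $T$ is, by construction, a $\Psi$-domain intersected with $S$ and $T\subseteq S^{\Phi}$, so $T^{\Phi}$ sits between the filtered preimage and $S^{\Phi}$; applying $\varphi_f^{+}$, filtering, $\varphi_f^{-}$, and re-intersecting reproduces $T$, using idempotency of $\pi_c^{\mathds{1}\mathds{1}\star}$ and of the approximation operators. \textbf{The main obstacle} I anticipate is bookkeeping the non-injectivity of $f$: because distinct objects can share an image, the preimage-after-filtering set is generally strictly larger than $\textrm{con}(c\circ f)\cap S^{\Phi}$, so every containment has to be proved as a genuine two-sided sandwich rather than an equality, and one must be careful that the $\Psi$-hull does not overshoot $(\textrm{con}(c\circ f)\cap S^{\Phi})^{\Psi}$---this is exactly where ``not approximating the intermediate view functions or propagators,'' as stressed after Definition~\ref{def:phi-psi-view-based-prop}, is used, and where the closure-under-intersection property (and the warning that $S_1^{\Phi}\cap S_2^{\Phi}\ne(S_1\cap S_2)^{\Phi}$ in general) must be applied with care.
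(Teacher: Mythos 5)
The paper itself does not contain a proof of this proposition (it is deferred to \cite{Corr10}), so your argument has to stand on its own, and as written it has one illusory obstacle and one real gap. The illusory one is your ``main obstacle'' in the completeness step: non-injectivity of $f$ causes no slack at all. Since $\pi_{c}^{\mathds{1}\mathds{1}\star}$ is sound and $\mathds{1}\mathds{1}$-complete it computes $\textrm{con}\left(c\right)\cap X$ exactly, and since $\varphi_{f}^{-}$ is a preimage operator it commutes with intersection, so $\widehat{\varphi}_{f}\left(\textrm{con}\left(c\right)\cap\varphi_{f}^{+}\left(S^{\Phi}\right),S^{\Phi}\right)=\varphi_{f}^{-}\left(\textrm{con}\left(c\right)\right)\cap\varphi_{f}^{-}\left(\varphi_{f}^{+}\left(S^{\Phi}\right)\right)\cap S^{\Phi}=\textrm{con}\left(c\circ f\right)\cap S^{\Phi}$, using $S^{\Phi}\subseteq\varphi_{f}^{-}\left(\varphi_{f}^{+}\left(S^{\Phi}\right)\right)$ and $\varphi_{f}^{-}\left(\textrm{con}\left(c\right)\right)=\textrm{con}\left(c\circ f\right)$: a survivor $\mathbf{x}\in S^{\Phi}$ with $f\left(\mathbf{x}\right)\in\textrm{con}\left(c\right)$ \emph{is} a solution of $c\circ f$, whether or not it shares its image with another point. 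You should prove this one-line identity instead of gesturing at a sandwich whose upper end you never state; without it the inclusion $\check{\pi}_{c\circ f}^{\Phi\Psi}\left(S\right)\subseteq\left(\textrm{con}\left(c\circ f\right)\cap S^{\Phi}\right)^{\Psi}$ is not actually established. With it, $\check{\pi}_{c\circ f}^{\Phi\Psi}\left(S\right)=\left(\textrm{con}\left(c\circ f\right)\cap S^{\Phi}\right)^{\Psi}\cap S$ and completeness is immediate.

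The real gap is idempotency. Your key assertion that $T^{\Phi}$ ``sits between the filtered preimage and $S^{\Phi}$'', i.e.\ that $\textrm{con}\left(c\circ f\right)\cap S^{\Phi}\subseteq T^{\Phi}$, is false for arbitrary tuple sets $S$, and with it literal idempotency fails. Take $\Phi=\Psi=\delta$, $f$ the identity on $\mathbb{Z}^{2}$, $\textrm{con}\left(c\right)=\left\{ \left\langle 0,0\right\rangle ,\left\langle 1,1\right\rangle \right\} $ and $S=\left\{ \left\langle 0,2\right\rangle ,\left\langle 1,0\right\rangle ,\left\langle 2,1\right\rangle \right\} $. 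Then $S^{\delta}=\left\{ 0,1,2\right\} ^{2}$, the filtered preimage is $\left\{ \left\langle 0,0\right\rangle ,\left\langle 1,1\right\rangle \right\} $, its $\delta$-hull is $\left\{ 0,1\right\} ^{2}$, and intersecting with $S$ gives $T=\left\{ \left\langle 1,0\right\rangle \right\} $; a second application filters $\textrm{con}\left(c\right)\cap T^{\delta}=\emptyset$ and returns $\emptyset\neq T$. So any correct proof must invoke a hypothesis your sketch never names: for instance that the proposition is quantified only over $\Phi$-domains $S=S^{\Phi}$ (the situation in the Tack-style framework the paper follows, where propagators only ever see such domains). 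Under that hypothesis $\textrm{con}\left(c\circ f\right)\cap S^{\Phi}\subseteq S$, hence it survives into $T$ and into $T^{\Phi}$, the second application reproduces the same $\Psi$-hull, and your fixpoint argument goes through; without it the idempotency claim cannot be proved because it is not true.
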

The use of the above proposition is rather limited since it applies
only to a propagator $\pi_{c}$ that is $\mathds{1}\mathds{1}$-complete
and idempotent. Achieving such completeness is usually intractable
in time and/or space in general, inequality constraints being a notable
exception. Moreover other approximations are performed in practical
view-based propagators, which are now presented.

\subsection{Box view-based propagators\label{sec:Box_view_Propagators}}

A box view-based propagator (or simply box view propagator) is a relaxation
of a $\beta\beta$-complete view-based propagator. \foreignlanguage{british}{In
addition} to the $\beta$-approximations already presented, it
\begin{enumerate}
\item $\beta$-approximates the output of the image function $\varphi^{+}$, 
\item $\beta$-approximates the input of the contracting object function
$\widehat{\varphi}$, 
\item uses a $\beta\beta$-complete and idempotent propagator for $c$
\end{enumerate}
To simplify its formalization, we use the following notation for special
applications of $\Phi$-approximations to views: $ $
\begin{eqnarray*}
\Phi_{f}^{+}\left(S\right) & = & \left(\varphi_{f}^{+}\left(S^{\Phi}\right)\right)^{\Phi}\\
\widehat{\Phi}_{f}\left(S_{1},S_{2}\right) & = & \left(\widehat{\varphi}_{f}\left(S_{1}^{\Phi},S_{2}^{\Phi}\right)\right)^{\Phi}
\end{eqnarray*}

We may now formalize box view-based propagators.
\begin{defn}
[Box view-based propagator]\label{box-view-model}A box view-based
propagator for a constraint $c\circ f$ is defined as
\[
\pi_{c\circ f}^{\square}\left(S\right)=\widehat{\beta}_{f}\left(\pi_{c}^{\beta\beta\star}\left(\beta_{f}^{+}\left(S\right)\right),S\right)\cap S
\]
Box view-based propagators allow very efficient implementations since
$\beta$-domains may be stored in constant space (the domain is fully
characterized by its lower and upper bound), and computing views on
$\beta$-domains does not depend on the size of the domain for most
functions. \end{defn}
\begin{example}
Let $f\left(x\right)=\left|x\right|$ and $S_{1},S_{2}\subseteq\mathbb{Z}$
two arbitrary sets of unary tuples, and assume $\left\lfloor S_{2}\right\rfloor \geq0$.
By definition, 
\begin{eqnarray*}
\beta_{f}^{+}\left(S_{1}\right) & = & \left\{ \left|x\right|:x\in S_{1}^{\beta}\right\} ^{\beta}\\
\widehat{\beta}_{f}\left(S_{2},S_{1}\right) & = & \left\{ x:\left|x\right|\in S_{2}^{\beta}\wedge x\in S_{1}^{\beta}\right\} ^{\beta}
\end{eqnarray*}
 This definition may suggest that evaluating these functions would
take linear time, but in fact they may be computed in constant time
assuming that finding the minimum and maximum of $S_{1}$, $S_{2}$
takes constant time: 
\begin{eqnarray*}
\beta_{f}^{+}\left(S_{1}\right) & = & \begin{cases}
\left[\left\lfloor S_{1}\right\rfloor \ldots\left\lceil S_{1}\right\rceil \right] & \Leftarrow\left\lfloor S_{1}\right\rfloor >0\\
\left[-\left\lceil S_{1}\right\rceil \ldots-\left\lfloor S_{1}\right\rfloor \right] & \Leftarrow\left\lceil S_{1}\right\rceil <0\\
\left[0\ldots\max\left(-\left\lfloor S_{1}\right\rfloor ,\left\lceil S_{1}\right\rceil \right)\right] & \textrm{otherwise}
\end{cases}\\
\widehat{\beta}_{f}\left(S_{2},S_{1}\right) & = & \begin{cases}
\left[\max\left(\left\lfloor S_{2}\right\rfloor ,\left\lfloor S_{1}\right\rfloor \right)\ldots\min\left(\left\lceil S_{2}\right\rceil ,\left\lceil S_{1}\right\rceil \right)\right] & \Leftarrow\left\lfloor S_{1}\right\rfloor >0\\
\left[\max\left(-\left\lceil S_{2}\right\rceil ,\left\lfloor S_{1}\right\rfloor \right)\ldots\min\left(-\left\lfloor S_{2}\right\rfloor ,\left\lceil S_{1}\right\rceil \right)\right] & \Leftarrow\left\lceil S_{1}\right\rceil <0\\
\left[\max\left(-\left\lceil S_{2}\right\rceil ,\left\lfloor S_{1}\right\rfloor \right)\ldots\min\left(\left\lceil S_{2}\right\rceil ,\left\lceil S_{1}\right\rceil \right)\right] & \textrm{otherwise}
\end{cases}
\end{eqnarray*}

\end{example}
A thorough analysis of the propagation achieved with box view propagators
is complex and dependent on the constraints involved (see \cite{Corr10}).
Here we only present a sufficient condition for a box view based propagator
to be bounds$\left(\mathbb{R}\right)$ complete.
\begin{prop}
A box view-based propagator for $c\circ f$ is always at least bounds$\left(\mathbb{R}\right)$
complete if $f$ is continuous. \end{prop}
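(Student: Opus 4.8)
Write $\mathrm{con}(c\circ f)$ for the relaxation of $c\circ f$ to the reals, i.e.\ $\{\mathbf{x}\in\mathbb{R}^{n}:c(f(\mathbf{x}))\}$ with $f$ read as its canonical real extension (as observed after Definition~\ref{def:propagator-completeness}, this is the reading of $\mathrm{con}$ appropriate to $\rho$); by Table~\ref{tab:propagator-strength} the goal is
\[
\pi_{c\circ f}^{\square\star}(S)\subseteq\bigl(\mathrm{con}(c\circ f)\cap S^{\rho}\bigr)^{\rho}\qquad\text{for every tuple set }S.
\]
The plan is to analyse a fixpoint of the box view propagator and show that every coordinatewise bound of the surviving tuple set is \emph{witnessed} by a genuine real solution of $c\circ f$ lying inside $S^{\rho}$. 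So let $S^{*}=\pi_{c\circ f}^{\square\star}(S)$; then $S^{*}\subseteq S$, hence $S^{*\beta}\subseteq S^{\beta}\subseteq S^{\rho}$ by monotonicity and Lemma~\ref{lem:domain-approx-order-1}, and $\pi_{c\circ f}^{\square}(S^{*})=S^{*}$. Since $\pi_{c\circ f}^{\square}(S^{*})=\widehat{\beta}_{f}(\,\cdot\,,S^{*})\cap S^{*}$ and $\widehat{\beta}_{f}$ returns a box, the fixpoint equation forces the box $\widehat{S}:=S^{*\beta}$ to equal $\bigl(\varphi_{f}^{-}(B^{\beta})\cap\widehat{S}\bigr)^{\beta}$, where $B=\pi_{c}^{\beta\beta\star}\bigl(\beta_{f}^{+}(S^{*})\bigr)$ and, since $\pi_{c}^{\beta\beta\star}$ is idempotent and $\beta\beta$-complete, $B^{\beta}\subseteq\bigl(\mathrm{con}(c)\cap(\varphi_{f}^{+}(\widehat{S}))^{\beta}\bigr)^{\beta}$. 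Consequently, for each coordinate $i$ both $\left\lfloor S^{*}\right\rfloor_{i}$ and $\left\lceil S^{*}\right\rceil_{i}$ are attained at some $\mathbf{x}\in\varphi_{f}^{-}(B^{\beta})\cap\widehat{S}$, i.e.\ at some $\mathbf{x}\in S^{\rho}$ with $f(\mathbf{x})\in B^{\beta}$. The statement therefore reduces to showing such $\mathbf{x}$ can be chosen with $f(\mathbf{x})\in\mathrm{con}(c)$: then $\mathbf{x}\in\mathrm{con}(c\circ f)\cap S^{\rho}$, so $\left\lfloor S^{*}\right\rfloor_{i},\left\lceil S^{*}\right\rceil_{i}\in\mathrm{conv}_{\mathbb{R}}\bigl(\mathrm{proj}_{i}(\mathrm{con}(c\circ f)\cap S^{\rho})\bigr)$ for all $i$, hence $\mathrm{proj}_{i}(S^{*})\subseteq\mathrm{conv}_{\mathbb{R}}\bigl(\mathrm{proj}_{i}(\mathrm{con}(c\circ f)\cap S^{\rho})\bigr)$ (using $S^{*}\subseteq S^{\rho}$), which is exactly $S^{*}\subseteq(\mathrm{con}(c\circ f)\cap S^{\rho})^{\rho}$.

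Continuity enters precisely in that reduced claim. The lift $\widetilde{S}$ of the integer box $\widehat{S}$ to a real box is closed, bounded and convex, hence compact and connected, so $f(\widetilde{S})$ is compact and connected and each $\mathrm{proj}_{j}(f(\widetilde{S}))$ is a closed real interval whose endpoints are attained at genuine points of $\widetilde{S}$. Because $B^{\beta}$ is the integer box hull of a set of images $\varphi_{f}^{+}(\widehat{S})\subseteq f(\widetilde{S})$ that has been filtered by $\pi_{c}^{\beta\beta\star}$ — which, being bounds$(\mathbb{Z})$ complete, is in particular at least bounds$(\mathbb{R})$ complete, cf.\ Example~\ref{exa:completeness-taxonomy} — the intermediate value theorem, applied coordinatewise along paths inside $\widetilde{S}$, lets one move from a component membership $f(\mathbf{x})\in B^{\beta}$ to a point of $\widetilde{S}\cap\varphi_{f}^{-}(\mathrm{con}(c))$ with the same $i$-th coordinate, producing the required witness in $\mathrm{con}(c\circ f)\cap S^{\rho}$.

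The main obstacle is this last upgrade, from ``$f(\mathbf{x})$ lies in the box $B^{\beta}$ distilled from $c$'' to ``$f(\mathbf{x})$ satisfies $c$''. For a scalar subexpression ($k=1$) it is immediate, since then $f(\widetilde{S})$ and $B^{\beta}$ are both intervals; the real difficulty is that in practice $f$ is itself a Cartesian product and composition $f=f_{1}\times\cdots$ of simpler functions, each introducing its own pair $\beta_{f_{\ell}}^{+},\widehat{\beta}_{f_{\ell}}$ of box approximations, and continuity must be threaded through the whole decomposition. I would therefore run the previous paragraph's argument by structural induction on the decomposition tree of $c\circ f$, using that compositions and Cartesian products of continuous functions are continuous and that every intermediate domain computed with the finer $\beta$-operator is contained in the one the coarser $\rho$-operator would compute (Lemma~\ref{lem:domain-approx-order-1}, monotonicity, and closure under intersection from Proposition~\ref{prop:approx}), so that the box view propagator prunes at least as much as a bounds$(\mathbb{R})$-complete propagator at every node; the base case is the simple constraint $c$, covered by the hypothesis that $\pi_{c}^{\beta\beta\star}$ is $\beta\beta$-complete and idempotent. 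A final, minor check is that, since all functions here have integer domains, ``continuous'' and ``the real relaxation'' are to be read through the canonical real extensions of the $f_{\ell}$, and these must agree with the integer functions on $\mathbb{Z}^{n}$ so that no spurious solution is introduced.
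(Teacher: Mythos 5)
Your overall strategy is genuinely different from the paper's, and the difference is exactly where the gap lies. The paper never hunts for witness points: it replaces $\pi_{c}^{\beta\beta\star}(\beta_{f}^{+}(S))$ by its definitional upper bound $\left(\mathrm{con}(c)\cap\beta_{f}^{+}(S)\right)^{\beta}$, relaxes every $\beta$-hull to a $\rho$-hull and $\mathrm{con}(c)$ to $\mathrm{con}_{\mathbb{R}}(c)$ by monotonicity, uses continuity of $f$ only to commute the $\rho$-approximation with $\varphi_{f}^{+}$ and $\widehat{\varphi}_{f}$, and then observes that $\widehat{\varphi}_{f}\left(\mathrm{con}_{\mathbb{R}}(c)\cap\varphi_{f}^{+}(S^{\rho}),S\right)=\varphi_{f}^{-}\left(\mathrm{con}_{\mathbb{R}}(c)\cap\varphi_{f}^{+}(S^{\rho})\right)\cap S$ is contained in $\mathrm{con}_{\mathbb{R}}(c\circ f)\cap S^{\rho}$ \emph{by definition of the object function}, so its $\rho$-hull lies inside $\left(\mathrm{con}_{\mathbb{R}}(c\circ f)\cap S^{\rho}\right)^{\rho}$. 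The final outer $\rho$-hull absorbs all the slack that you are trying to control by hand.

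The gap in your version is the ``upgrade'' step that you yourself flag as the main obstacle: from $f(\mathbf{x})\in B^{\beta}$ you want a point $\mathbf{y}$ with the \emph{same} $i$-th coordinate and $f(\mathbf{y})\in\mathrm{con}_{\mathbb{R}}(c)$. First, this is stronger than what $\rho\rho$-completeness asks for --- the target is $\mathrm{proj}_{i}(S^{*})\subseteq\mathrm{conv}_{\mathbb{R}}\left(\mathrm{proj}_{i}\left(\mathrm{con}_{\mathbb{R}}(c\circ f)\cap S^{\rho}\right)\right)$, so a surviving coordinate value only has to lie \emph{between} the projections of two real solutions, not be exactly attained by one; and exact witnesses need not exist (consider a slice $\{\mathbf{y}:y_{i}=x_{i}\}$ on which $f$ is constant with value in $B^{\beta}\setminus\mathrm{con}_{\mathbb{R}}(c)$: the box propagator keeps $x_{i}$, no witness lives on that slice, yet the proposition still holds because the convex hull of the projected solution set covers $x_{i}$). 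Second, the intermediate value argument offered for the reduced claim does not go through for $k>1$: connectedness of the slice gives that its image under $f$ is a connected subset of the box $B^{\beta}$, but nothing forces that connected set to meet $\mathrm{con}_{\mathbb{R}}(c)$, which for a $k$-ary constraint is in general a thin subset of its own box hull. Two smaller slips: the fixpoint equation only yields $S^{*}\subseteq\left(\varphi_{f}^{-}(B^{\beta})\cap S^{*\beta}\right)^{\beta}$, not the equality of boxes you assert, so the bounds of $S^{*}$ need not be attained where you say; and the closing structural induction is unnecessary --- the proposition concerns a single composition $c\circ f$ with $f$ continuous, and a one-level monotone chain of inclusions in the style sketched above suffices. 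I would abandon the witness/IVT route and redo the proof that way.
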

\begin{proof}
\begin{flalign*}
\pi_{c\circ f}^{\square}\left(S\right)= & \widehat{\beta}_{f}\left(\pi_{c}^{\beta\beta\star}\left(\beta_{f}^{+}\left(S\right)\right),S\right)\cap S & \textrm{(def. \ref{box-view-model}})\\
= & \widehat{\beta}_{f}\left(\left(\textrm{con}\left(c\right)\cap\left(\beta_{f}^{+}\left(S\right)\right)^{\beta}\right)^{\beta},S\right)\cap S & \textrm{(def.\ref{def:propagator-completeness}})
\end{flalign*}
Given that $\left(\beta_{f}^{+}\left(S\right)\right)^{\beta}=\beta_{f}^{+}\left(S\right)$
and $\beta_{f}^{-}\left(S^{\beta}\right)=\beta_{f}^{-}\left(S\right)$,
\begin{flalign*}
\pi_{c\circ f}^{\square}\left(S\right)= & \widehat{\beta}_{f}\left(\textrm{con}\left(c\right)\cap\beta_{f}^{+}\left(S\right),S\right)\cap S
\end{flalign*}
Since $\textrm{con}\left(c\right)\subseteq\textrm{con}_{\mathbb{R}}\left(c\right)$,
$S^{\beta}\subseteq S^{\rho}$ for any $S\subseteq\mathbb{Z}^{n}$,
and all operators are monotonic,
\begin{flalign*}
 & \pi_{c\circ f}^{\square}\left(S\right)\subseteq\widehat{\rho}_{f}\left(\textrm{con}_{\mathbb{R}}\left(c\right)\cap\rho_{f}^{+}\left(S\right),S\right)\cap S
\end{flalign*}
Note that when $f$ is continuous the following is true,
\begin{flalign*}
 & \rho_{f}^{+}\left(S\right)=\left(\varphi_{f}^{+}\left(S^{\rho}\right)\right)^{\rho}=\varphi_{f}^{+}\left(S^{\rho}\right)\\
 & \widehat{\rho}_{f}\left(S_{1},S_{2}\right)=\left(\widehat{\varphi}_{f}\left(S_{1}^{\rho},S_{2}\right)\right)^{\rho}=\left(\widehat{\varphi}_{f}\left(S_{1},S_{2}\right)\right)^{\rho}
\end{flalign*}
Rewriting the above expression using these equivalences gives,
\begin{flalign*}
\pi_{c\circ f}^{\square}\left(S\right)\subseteq & \left(\widehat{\varphi}_{f}\left(\textrm{con}_{\mathbb{R}}\left(c\right)\cap\varphi_{f}^{+}\left(S^{\rho}\right),S\right)\right)^{\rho}\cap S\\
\subseteq & \pi_{c\circ f}^{\rho\rho\star}\left(S\right) & \textrm{(def.\ref{def:propagator-completeness}})
\end{flalign*}

\end{proof}
\selectlanguage{british}%
So far we have been discussing view based propagators obtained from
composition of views with propagators that are idempotent. Some practical
propagators are non-idempotent (or have non-idempotent implementations)
for efficiency reasons\foreignlanguage{english}{. Unfortunately, the
strength of a view based propagator for $c\circ f$ is influenced
by the idempotency of the underlying propagator $\pi_{c}$. }
\begin{prop}
\label{prop:prop-strong-idemp-1}A box view-based propagator for a
constraint $c\circ f$ is stronger than a corresponding view-based
propagator using a non-idempotent propagator $\pi_{c}$ for $c$.
Moreover, it may be strictly stronger.
\end{prop}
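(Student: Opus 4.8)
The plan is to split the statement into its two clauses: first, that the box view-based propagator $\pi_{c\circ f}^{\square}$ using $\pi_{c}^{\beta\beta\star}$ is always at least as strong as the variant obtained by replacing $\pi_{c}^{\beta\beta\star}$ with a single (non-iterated, non-idempotent) application of some $\beta\beta$-complete propagator $\pi_{c}$ for $c$; and second, to exhibit a concrete $c$, $f$ and tuple set $S$ on which the inclusion is strict. For the first clause I would unfold Definition \ref{box-view-model} for both propagators, so that the only difference between the two expressions is the first argument fed to $\widehat{\beta}_{f}$: for the idempotent version it is $\pi_{c}^{\beta\beta\star}(\beta_{f}^{+}(S))$, and for the non-idempotent version it is $\pi_{c}(\beta_{f}^{+}(S))$. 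Since $\pi_{c}^{\star}(T)\subseteq\pi_{c}(T)$ for every tuple set $T$ (a fixpoint is reached by repeated contracting application, so iterating can only shrink the set), and since $\widehat{\beta}_{f}(\cdot,S)\cap S$ is monotonic in its first argument by Proposition \ref{prop:approx} (monotonicity of $\beta$, monotonicity of $\varphi_{f}^{-}$, and monotonicity of intersection), monotonicity propagates the inclusion through to the outputs, giving $\pi_{c\circ f}^{\square}(S)\subseteq$ (non-idempotent version)$(S)$, i.e. the box view-based propagator is stronger.

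For the "strictly stronger" clause I would reuse the running linear/sum example, e.g. $c = c_{\mathrm{eq0}}$ acting on $c\circ g$ with $g$ the sum-of-three-variables function of Example \ref{exa:sum-constraint}, or more simply a bounds-consistency propagator for a linear equality such as $[x_{1}+x_{2}+x_{3}=0]$, and pick a domain $S = S_{x_{1}}\times S_{x_{2}}\times S_{x_{3}}$ on which a single pass of the standard bounds propagator for $c$ does not reach a fixpoint (this is the familiar phenomenon that one sweep of a sum propagator tightens some bounds, and a second sweep tightens further using the newly tightened bounds). On such an $S$, $\pi_{c}(\beta_{f}^{+}(S))$ is strictly larger than $\pi_{c}^{\beta\beta\star}(\beta_{f}^{+}(S))$, and I would check that this strict difference survives the application of $\widehat{\beta}_{f}(\cdot,S)\cap S$ — i.e. that at least one value removed by the extra iterations of $\pi_{c}$ corresponds, under $\varphi_{f}^{-}$, to a value still present in $S$ for some variable. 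Concretely I would compute both sides explicitly on the chosen small instance and display the two resulting domains, showing one is a proper subset of the other.

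The main obstacle is the strictness half, and within it the delicate point is ensuring the gap does not collapse when pulled back through the object function and intersected with $S$: it is easy to write down a $\pi_{c}$ that is not idempotent, but one must choose $f$ and $S$ so that the values pruned only by the extra iterations of $\pi_{c}$ on the subexpression domain actually rule out tuples that are still live in the original variable domains. Choosing $f$ to be (a product of) projections composed with simple arithmetic — so that $\varphi_{f}^{+}$ and $\varphi_{f}^{-}$ are transparent and the pullback of a pruned bound is a pruned variable bound — keeps this bookkeeping routine; the rest is a direct computation on a four- or five-point domain, which I would present inline rather than in full generality.
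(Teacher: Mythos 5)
Your first half coincides with the paper's argument: after unfolding Definition~\ref{box-view-model}, the two propagators differ only in the first argument handed to $\widehat{\beta}_{f}$, the iterated propagator satisfies $\pi_{c}^{\star}(T)\subseteq\pi_{c}(T)$ because each application is contracting, and monotonicity of $\varphi_{f}^{-}$, of the $\beta$-approximation (Proposition~\ref{prop:approx}) and of intersection pushes the inclusion through to the output. That part is sound and is exactly what the paper says.

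The strictness half is where your plan, as instantiated, is likely to fail. First, $c_{\textrm{eq0}}$ is a \emph{unary} constraint, and its natural bounds propagator (intersect the interval with $\{0\}$) is already idempotent, so that choice produces no gap. Second, for a unit-coefficient linear equality such as $[x_{1}+x_{2}+x_{3}=0]$ the standard bounds propagator reaches its fixpoint in one round on interval domains; the ``second sweep tightens further'' phenomenon you invoke arises across several constraints, or with non-unit coefficients and integer rounding, so you would have to cripple the propagator deliberately --- legitimate, since the claim is existential over $\pi_{c}$, but not the routine computation you describe. Third, making $f$ ``transparent'' so the pullback is lossless does let you verify the literal single-application statement, but it misses the phenomenon the proposition is meant to capture: with a lossless $f$ the gap closes after one more invocation of the whole view propagator, exactly as it would in the \textsc{Vars} model. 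The paper's example is engineered the opposite way: $c=[2x_{1}=x_{2}]$, $f(x_{1},x_{2},x_{3})=\langle x_{1}x_{2},x_{3}\rangle$, $D=[2..3]\times[2..3]\times[9..15]$, and a two-rule propagator that is non-idempotent because of integer division. A single sweep prunes only the subexpression component $D(x_{1}\times x_{2})$ from $[4..9]$ to $[5..7]$, and that pruning is destroyed by $\widehat{\beta}_{f}$ (the box pullback of $[5..7]$ through $x_{1}x_{2}$ over $[2..3]^{2}$ is again $[2..3]\times[2..3]$), so the non-idempotent version achieves nothing and remains stuck under re-invocation; the iterated propagator instead converts that intermediate pruning into $D(x_{3})=[10..14]$, which survives the pullback because the third component of $f$ is a projection. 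Your worry about the gap collapsing under the object function is exactly the right one, but the fix is to make the pullback lossy on the subexpression and lossless on the variable that ultimately gets pruned --- not to make everything transparent.
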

The fact that a view-based propagator for a constraint $c\circ f$
with an idempotent propagator for $c$ achieves at least the same
pruning as with a non-idempotent propagator for $c$ is not surprising
since $\pi_{c}^{\star}$ is always stronger than $\pi_{c}$ and all
other involved functions and operators are monotonic. The following
example shows it can indeed achieve more pruning.
\begin{example}
Let $c\circ f=\left[2\cdot x_{1}\cdot x_{2}=x_{3}\right]$ be a decomposable
constraint, where $c=\left[2\cdot x_{1}=x_{2}\right]$ and $f\left(x_{1},x_{2},x_{3}\right)=\left\langle x_{1}\cdot x_{2},x_{3}\right\rangle $.
A box view-based propagator $\pi_{c\circ f}^{\square}$ applied to
$D=\left[2\ldots3\right]\times\left[2\ldots3\right]\times\left[9\ldots15\right]$
leaves the domains of $x_{1}$ and $x_{2}$ unchanged, but prunes
the domain of $x_{3}$ since 
\begin{eqnarray*}
\beta_{f}^{+}\left(D\right) & = & \left[4\dots9\right]\times\left[9\dots15\right]\\
\pi_{c}^{\beta\beta\star}\left(\left[4\dots9\right]\times\left[9\dots15\right]\right) & = & \left[5\dots7\right]\times\left[10\dots14\right]\\
\widehat{\beta}_{f}\left(\left[5\dots7\right]\times\left[10\dots14\right],D\right)\cap D & = & \left[2\dots3\right]\times\left[2\dots3\right]\times\left[10\dots14\right]
\end{eqnarray*}

Now consider the following non-idempotent propagator $\pi_{c}^{\beta\beta}$
for the same constraint $c$: 
\begin{eqnarray*}
\pi_{c}^{\beta\beta} & = & \begin{cases}
D\left(x_{3}\right) & \leftarrow D\left(x_{3}\right)\cap\left[\left\lfloor D\left(x_{1}\times x_{2}\right)\right\rfloor \times2\ldots\left\lceil D\left(x_{1}\times x_{2}\right)\right\rceil \times2\right]\\
D\left(x_{1}\times x_{2}\right) & \leftarrow D\left(x_{1}\times x_{2}\right)\cap\left[\left\lfloor D\left(x_{3}\right)\right\rfloor /2\ldots\left\lceil D\left(x_{3}\right)\right\rceil /2\right]
\end{cases}
\end{eqnarray*}

Unlike in the previous case, the box view propagator obtained from
composition with the non-idempotent propagator $\pi_{c}$ would not
achieve any pruning, since 
\begin{eqnarray*}
\beta_{f}^{+}\left(D\right) &  & \left[4\dots9\right]\times\left[9\dots15\right]\\
\pi_{c}^{\beta\beta}\left(\left[4\dots9\right]\times\left[9\dots15\right]\right) & = & \left[5\dots7\right]\times\left[9\dots15\right]\\
\widehat{\beta}_{f}\left(\left[5\dots7\right]\times\left[9\dots15\right],D\right)\cap D & = & \left[2\dots3\right]\times\left[2\dots3\right]\times\left[9\dots15\right]
\end{eqnarray*}

\end{example}
The reason for this may be explained as follows: not being idempotent,
the change in the domain of $D\left(x_{1}\times x_{2}\right)$ is
not immediately propagated back to $D\left(x_{3}\right)$ but instead
is approximated back to the initial value by the encapsulating view
functions, $\beta_{f}^{+}$ and $\widehat{\beta}_{f}$. Therefore,
the subsequent application of the box view propagator to have no effect. 

The above proposition explains the differences in propagation strength
between decompositions using auxiliary variables and view objects.
With variable decomposition, non-idempotent propagators for sub-expressions
will always act as idempotent propagators: sub-expressions are propagated
independently and will be added to the propagation queue until a fix
point is reached. In theory this means that decompositions using auxiliary
variables may attain stronger propagation. In practice these differences
do not seem to be very significant, and are compensated by other factors
as the experimental results presented later suggest. 

\selectlanguage{english}%

\section{\label{cha:Implementation-and-Experiments}Implementation}

Implementation of views for an arbitrary expression may be complex,
depending on the type of approximations the view is considering. For
the specific case of box view propagators, the code that implements
a view over an expression $e$ should be no more complex than the
code for a propagator of $e=z$, where $e$ is an expression involving
variables (i.e. not other expressions) and $z$ is a variable. For
example, implementing a box view for the expressions $e_{1}+e_{2}$,
or $e_{1}\times e_{2}$, where $e_{i}$ are expressions is very similar
to implementing the propagator for $x_{1}+x_{2}=x_{3}$, or $x_{1}\times x_{2}=x_{3}$,
where $x_{i}$ are variables. The simplicity of these implementations
is in fact a key advantage of box view propagators. The following
sections provide some examples of such implementations.

\subsection{Box view propagators in strongly typed programming languages\label{sec:impl-strongly-typed}}

Implementing our conceptual box view propagator in a strongly typed
programming language is possible if some sort of type polymorphism
support is available in the language. Most if not all popular strongly
typed programming languages have built in support for subtype polymorphism,
either by overloading or through the use of inheritance in the case
of object oriented programming languages. In addition, parametric
polymorphism has been introduced in some object oriented programming
languages such as C++, \foreignlanguage{british}{Java} and C\#. Parametric
polymorphism allows \foreignlanguage{british}{aggressive} compiler
optimizations, namely function code \foreignlanguage{british}{inlining}
(i.e. replacing function calls by the actual code), which has a significant
impact on performance as we will see later.

We have implemented box view propagators in C++. Since C++ supports
both subtype and parametric polymorphism, we were able to integrate
the two variants of our model within the constraint solver engine,
therefore obtaining a fair experimental platform.

\subsubsection{Subtype polymorphism}

Subtype polymorphism is available in C++ through the use of inheritance.
In this setting we need to define an abstract interface for box view
objects:
\begin{lstlisting}[basicstyle={\small},frame=lines,language={C++}]
class Box {
    virtual int getMin()=0;
    virtual int getMax()=0;
    virtual bool updMin(int i)=0; 
    virtual bool updMax(int i)=0; 
};
\end{lstlisting}

A box view object for a specific function implements the box view
object interface (for convenience the update methods return whether
the operation does not result in an empty box).
\begin{example}
The following class defines the subtype polymorphic box view object
for the addition of two box view objects. 
\begin{lstlisting}[basicstyle={\small},frame=lines,language={C++}]
class Add2 : Box {
    Add2(Box ax, Box ay) : x(ax),y(ay) {}
    virtual int getMin() { return x.getMin()+y.getMin(); }
    virtual int getMax() { return x.getMax()+y.getMax(); }
    virtual bool updMin(int i) 
    { return x.updMin(i-y.getMax()) and y.updMin(i-x.getMax());}
    virtual bool updMax(int i) 
    { return x.updMax(i-y.getMin()) and y.updMax(i-x.getMin());}
    Box x;
    Box y;
};
\end{lstlisting}

We should note that for the sake of simplicity we omit details on
the efficient copy and garbage collection of box view objects. Compiling
a given constraint into subtype polymorphic box view objects is straightforward
since in C++ expressions are evaluated bottom-up. Below are a set
of convenience functions which may be used to create subtype polymorphic
view box objects for a binary addition.
\end{example}
\begin{lstlisting}[basicstyle={\small},frame=lines,language={C++}]
Add2 add(Box x,Box y)
{ return Add2(x,y); }

Add2 operator+(Box x,Box y)
{ return Add2(x,y); }
\end{lstlisting}

The user may then create box view objects for arbitrary expressions
in C++ using a clean syntax:

\begin{lstlisting}[basicstyle={\small},frame=lines,language={C++}]
DomVar a,b,c;
a+b*c;
add(a,mul(b,c));
\end{lstlisting}

From what we could infer from the available documentation, \emph{constrained
expressions} introduced in ILOG Solver \cite{ILOG2003} correspond
to box view objects implemented using subtype polymorphism as shown
above.

\subsubsection{Parametric polymorphism\label{sub:Using-views-in-c++}}

The fact that the C++ compiler evaluates expressions bottom-up makes
the implementation of parametric polymorphic view objects slightly
more complex, since they need to be compiled top-down. The solution
we propose breaks the compilation algorithm in two phases. The first
phase creates a syntactic representation of the expression, called
a type parametric relation object, using the natural bottom-up evaluation
order intrinsic in the language. Type parametric relations capture
the data and the type of the objects and operations involved in the
constraint. After the full constraint is compiled, we use the obtained
relation object for instantiating the required view objects.

We will use templates for defining type parametric relations, since
this is the language mechanism available in C++ to support type parametric
polymorphism. The following template defines generic binary relations,
where ``Op'' is a type describing the operator, and ``X'' and
``Y'' are types of the operands.

\begin{lstlisting}[basicstyle={\small},frame=lines,language={C++}]
template<class Op,class X,class Y>
class Rel2 {
    Rel2(X x, Y y) : x(x),y(y) {}
    X x;
    Y y;
};
\end{lstlisting}

Since any expression may be transformed to a relation object with
a unique type, we can create view objects over arbitrary expressions
by defining templates over relation objects. 
\begin{example}
The following template defines the parametric polymorphic box view
object for the addition of two arbitrary objects:
\begin{lstlisting}[basicstyle={\small},frame=lines,language={C++}]
template<class X,class Y>
class Box<Rel2<Add,X,Y> > {
    Box(Rel2<Add,X,Y> r) : x(r.x),y(r.y) {}
    int getMin() { return x.getMin()+y.getMin(); }
    int getMax() { return x.getMax()+y.getMax(); }
    bool updMin(int i) 
    { return x.updMin(i-y.getMax()) and y.updMin(i-x.getMax());}
    bool updMax(int i) 
    { return x.updMax(i-y.getMin()) and y.updMax(i-x.getMin());}
    Box<X> x;
    Box<Y> y;
};
\end{lstlisting}

\end{example}
Parametric relation objects are created by a set of convenience functions,
such as: 
\begin{lstlisting}[basicstyle={\small},frame=lines,language={C++}]
template<class X,class Y>
Rel2<Add,X,Y> add(X x,Y y)
{ return Rel2<Add,X,Y>(x,y); }

template<class X,class Y>
Rel2<Add,X,Y> operator+(X x,Y y)
{ return Rel2<Add,X,Y>(x,y); }
\end{lstlisting}

Note that the above functions only create the relation object for
the expression, and not the corresponding view object. Creating the
view object is accomplished by providing the relation object to the
following function (which is basically a convenience function to avoid
specifying the parameter T):

\begin{lstlisting}[basicstyle={\small},frame=lines,language={C++}]
template<class T>
Box<T> box(T t) 
{ return Box<T>(t); }
\end{lstlisting}

The following code instantiates two parametric box view objects for
an expression using the above constructs.

\begin{lstlisting}[basicstyle={\small},frame=lines,language={C++}]
DomVar a,b,c;
box(a+b*c);
box(add(a,mul(b,c)));
\end{lstlisting}

\selectlanguage{british}%

\subsection{Incrementality}

\selectlanguage{english}%
It is important to note that incremental propagators (i.e. propagators
which maintain a state) may be used transparently with views. This
may be very useful in practice,  for example to model the bounds complete
distinct propagator over expressions, such as the main constraint
of the Golomb ruler problem (given in the next section), 
\[
\noun{{distinct}}\left(\left\{ x_{i}-x_{j}:1\leq j<i\leq m\right\} \right)
\]
There is also nothing preventing us from creating views that maintain
an internal state. Although we have implemented this for some expressions,
namely the \noun{Element} expression, it does not seem to be useful
for most $\beta\beta$ views that are cheap to evaluate. However,
it could certainly make a difference if using $\delta\delta$ views
for domain propagation. In summary, using views does not constrain
the propagator implementation model to be either incremental or non-incremental.

\subsection{Triggering}

Triggering is a well known method for decreasing the number of redundant
propagations during a fixpoint computation \cite{Schulte2004}. A
trigger may be seen as a condition for propagation - propagators are
known to be idempotent for the current domain until the condition
is true, i.e. when they may perform propagation. The way triggering
is used with views is not much different from its use in the variable
decomposition approach. In addition to the methods presented above,
each view object must provide methods for creating/deleting triggers
on the relevant events. This must be adapted to the events available
in the solver: for example, a trigger on both bounds of an expression
$x+y$ should map to the four bounds of $x$ and $y$, whereas a trigger
on the minimum of $-x$ maps to the maximum of $x$.

Given that each view object provide methods for creating and removing
triggers, moving triggers is also possible using views. Consider for
example a box view over the expression \noun{ifThenElse}$(C,T,F)$
which represents the box $T$ if $C$ is true, and $F$ if it is false.
Initially the box view object for this expression creates triggers
on $C$, $T$, and $F$ (note that inference can be made even if $C$
is not ground). When $C$ is instantiated it removes the trigger on
either $T$ or $F$. We should make two important remarks about creating/deleting
triggers. Firstly, at least in our current implementation, moving
triggers is not as efficient as in e.g. \cite{Gent2006}. This is
due to the fact that it is recursive on the structure of the expression
(consider for example creating a trigger on an expression containing
deeply nested expressions). This is tantamount with other aspects
of views - a specialized propagator for the constraint will always
have better runtime performance. Secondly, view objects must always
check beforehand if it is safe to move a trigger. In our example,
the box view object must check if $C$ is ground (i.e. if getMin()=getMax())
before deleting the relevant triggers, even if it has updated the
domain of $C$ itself. This has to do with (non) persistence of update
operations explained in the following section.

\subsection{Persistent operations and idempotency checking}

An important feature of box view objects is that operations are not
guaranteed to be persistent. Consider for example the constraint $c=\left[X\neq k\right]$,
where $X$ is an expression and $k$ is a constant. The pseudocode
for a bounds consistent propagator for this constraint is as follows:

\begin{lstlisting}[basicstyle={\small},frame={lines},language={C++},numbers=left]
bool propagate(Box<int> X, int k) {
    if (X.min()==k)
        return X.updateMin(k+1);
    if (X.max()==k)
        return X.updateMax(k-1);
    return true;
}
\end{lstlisting}
This propagator may be used for propagating several constraints by
specifying different expressions for $X$, for instance the constraints
$c_{1}=\left[x_{1}\neq k\right]$ and $c_{2}=\left[y_{1}+y_{2}\neq k\right]$
where $X$ is respectively $x_{1}$ and $y_{1}+y_{2}$. While lines
3 and 5 guarantee that the domain of expression $X$ will be different
from $k$ when propagating constraint $c_{1}$, this is not guaranteed
when propagating constraint $c_{2}$ (consider for example $D\left(y_{1}\right)=D\left(y_{2}\right)=\left\{ 1,2\right\} $
and $k=4$). In this simple propagator the fact that some update operations
are non-persistent is not important: the propagator will be scheduled
again when either the minimum or maximum of $X$ changes, eventually
leading to a persistent update. In general, view based propagators
must be designed to take non-persistent update operations into consideration.
The class of propagators that check for idempotency deserves special
attention. 

Idempotency checking is a technique for decreasing the number of redundant
propagations during solving. The main idea is to avoid scheduling
propagators that report idempotency for a given domain since no further
pruning would be achieved. A naive (but incorrect) implementation
of this optimization on the propagator above would be as follows:

\begin{lstlisting}[basicstyle={\small},frame={lines},language={C++}]
status propagate(Box<int> X, int k) {
    if (X.min()==k)
        return (X.updateMin(k+1))?idempotent:failed;
    if (X.max()==k)
        return (X.updateMax(k-1))?idempotent:failed;
    return suspend;
}
\end{lstlisting}
In the above pseudocode a propagator can be in one of three states:
failed (i.e. inconsistent for the current domain), idempotent, or
suspended (i.e. neither failed nor idempotent). Due to non-persistent
operations for some instantiations of $X$ this propagator may report
``idempotent'' when in fact it is not. Instead, a correct implementation
of this optimization must not rely on persistent update operations:
\begin{lstlisting}[basicstyle={\small},frame={lines},language={C++}]
status propagate(Box<int> X, int k) {
    if (X.min()==k)
    {    
        if (X.updateMin(k+1))
            return (X.min()>k)?idempotent:suspend;
        else
            return failed;
    }
    if (X.max()==k)
        (similar)
    return suspend;
}
\end{lstlisting}
In summary, checking for idempotent view-based propagators is possible
but must be carefully designed in order not to rely on persistent
update operations as shown above.

\selectlanguage{british}%

\subsection{\label{cha:Complexity_analysis} Complexity analysis}

The adoption of views avoids introducing auxiliary variables and propagators
for every subexpression. Conceptually, a view object over an expression
serves the same purpose as the auxiliary variable introduced for that
expression: to expose its domain. However these models have distinct
operational tradeoffs. To illustrate this let us focus on an arithmetic
constraint involving $n$ variables with uniform domain size $d$,
with an unbalanced syntax tree, i.e. where each operator in the expression
involves at least one variable. Figure \ref{fig:Expression-AST-1}
shows a fragment of the expression syntax tree. We will refer to the
decomposition model using auxiliary variables as \noun{Vars}, subtype
polymorphic view as \noun{SViews}, and parametric polymorphic views
as \noun{PViews}.

\begin{figure}
\hfill{}\begin{tikzpicture}[scale=0.7]
\path (2,4) node (nn_1) {$n_{n-1}$} --
      (0,2) node (n2) {$n_2$} --
      (-1,1) node (n1) {$n_1$} --
      (-2,0) node (l1) {$l_1$};
\path (n1) -- (0,0) node (l2) {$l_2$};
\path (n2) -- (1,1) node (l3) {$l_3$};
\path (nn_1) -- (3,3) node (ln) {$l_n$};
\draw [dashed] (nn_1) -- (n2);
\draw (n2) -- (n1) -- (l1);
\draw (n1) -- (l2);
\draw (n2) -- (l3);
\draw (nn_1) -- (ln);
\end{tikzpicture}\hfill{}

\caption{\label{fig:Expression-AST-1}An unbalanced expression syntax tree.
The internal nodes $n_{1}\ldots n_{n-1}$ represent operators and
leafs $l_{1}\ldots l_{n}$ represent variables.}
\end{figure}
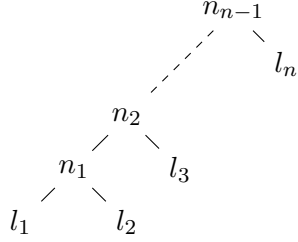

\subsubsection{Memory\label{sub:views-memory}}

A box view object can be designed to expose just the subset of the
expression's domain required for the view's client (e.g. the bounds
of the expression). In contrast, a variable maintains the domain of
the expression, possibly containing regions that will always be ignored
for propagation. For an expression containing $n-1$ operators (fig.~\ref{fig:Expression-AST-1}),
the memory overhead of the \noun{Vars} model is in $O\left(nD\right)$,
where $D$ is the size of the largest domain of an auxiliary variable.
In practice, although $D$ may be as large as $d^{n-1}$, many solvers
\cite{GecodeTeam2006,Gent2006b,ILOG2003} use intervals to store the
domains of the auxiliary variable (i.e. $D=2$), thus eliminating
this problem.

\subsubsection{Runtime\label{sub:views-runtime}}

The analysis of runtime complexity focuses on the number of propagator
executions, function calls, arithmetic operations performed and updates
of variable bounds, for a single propagation of the expression. We
consider two worst-case situations: a) when propagation (up to the
root) is due to a change in the domain of a leaf variable, and b)
when propagation to leaf variables is caused by a change in the domain
of the root. These situations correspond, respectively, to accessing
and updating the bounds of the expression.
\begin{description}
\item [{a)}] In the \noun{Vars} model $O\left(n\right)$ propagators may
execute, forcing changes in the bounds of $O\left(n\right)$ (auxiliary)
variables with $O\left(n\right)$ operations. In both view models,
an update of a leaf variable domain causes one single propagator to
execute and evaluate the full expression. Such evaluation requires
$O\left(n\right)$ function calls and $O\left(n\right)$ operations%
\footnote{A single update of the expression requires the evaluation of the full
tree, possibly involving $n^{2}$ total operations (as well as $n^{2}$
function calls in case of the \noun{SViews} model), if subexpressions
are evaluated multiple times recursively. It is possible to decrease
this number to $n$ if the evaluation of the subexpressions is cached
during the execution of the propagator. This may be done efficiently
with box views since the cached data is simply the two bounds of the
subexpression that do not have to be backtrackable.%
}. 
\item [{b)}] In the \noun{Vars} model $O\left(n\right)$ propagators may
execute, forcing changes in the bounds of up to $O\left(2n\right)$
variables ($n$ leaf variables and $n-2$ auxiliary variables), requiring
up to $O\left(n\right)$ operations. Both view models force $O\left(n\right)$
updates of bounds of the (leaf) variables, through $O\left(n\right)$
function calls and $O\left(n\right)$ operations, but no extra propagators
will execute. 
\end{description}
Although all the models present the same worst-case propagation complexity
of $O\left(n^{2}d\right)$, breaking down the costs shows that the
view models perform fewer calls to propagators and fewer variable
updates (there are no auxiliary variable) than the \noun{Vars} model.
The fundamental operational difference between the \noun{Vars} model
and both view models is that a view object computes its domain on
demand, that is, it will never update its domain before needed by
the view\textquoteright{}s client, in contrast with the \noun{Vars}
model, where additional propagators will be posted to prune the domains
of the auxiliary variables. Among all contributions, managing the
extra propagators should involve the most significant costs. On the
other hand, the view models require extra operations and function
calls. Overall, and despite the same worst-case complexity we expect
that the VARS models, that may suffer from the execution of more propagators,
are out-performed by the \noun{Views} models, specially by the \noun{PViews}
where the compiler is often able to avoid the overhead of function
calls using inlining. But these expectations can only be adequately
assessed by experimentation, so in the next section we present results
obtained in a comprehensive set of experiments.\selectlanguage{british}%

\section{Experimental Results\label{sec:Experiments}}

In this section we evaluate the performance of the decomposition methods
described in the previous sections on a set of benchmarks.

\subsection{Experiments}

Specifically we are interested in comparing the following models.

\subsubsection{Models}
\begin{description}
\item [{~}] \textbf{\textsc{Vars}}. This is the classical method for
decomposing constraints into primitive propagators introducing one
auxiliary variable for each subexpression.
\item [{~}] \textbf{\textsc{Vars+Global}}. This model is similar to the
previous but uses global constraints for lowering the number of auxiliary
variables. Only a subset of problems support this decomposition in
which case we will specifically mention which global constraints are
used.
\item [{~}] \textbf{\textsc{PViews}}. The model that implements the decomposition
based on parametric polymorphic view objects.
\item [{~}] \textbf{\textsc{SViews}}. The decomposition based on subtype
polymorphic view objects.
\item [{~}] \textbf{\textsc{Views+Global}}. Like the \noun{Vars+Global}
model, this model uses a combination of some type of views and a global
constraint propagator.
\end{description}
All the above decomposition models were implemented in CaSPER \cite{Correia2008,Correia2005}.
Additionally, we also implemented the first two in Gecode \cite{GecodeTeam2006}
denoted \noun{Gecode-Vars} and \noun{Gecode-Vars+Global} respectively.
Comparing to the Gecode solver assesses the competitiveness of CaSPER
as a whole, in order to clarify that the performance of using our
method for propagation compared to using auxiliary variables is not
due to an inefficient implementation of the latter.

\subsubsection{Problems\label{sub:Problems}}

The set of benchmarks covers a total of 22 instances from 6 different
problems. Before we present them in detail, we should make a few general
considerations. 

For each given instance we used the same labeling heuristics (or no
heuristics at all) for testing the above models. This means that,
for each instance, the solvers resulting from the implementation of
the above models explore exactly the same search space, unless the
decompositions have different propagation strength, which may occur
as we have already seen.

In the following, we will mostly focus on the decomposable constraints
for which our models apply. When describing the model, we may choose
to ignore other necessary, redundant, or symmetry breaking constraints
that we used in our implementations. These were kept constant across
all implementations of the above models for each benchmark and therefore
do not influence our conclusions. For additional information, we provide
references to detailed descriptions of the problems in the online
constraint programming benchmark database CSPLib \cite{Gent1999}.
The source code for all the implementations can be obtained from the
first author upon request.

\paragraph*{Systems of linear equations.}

This experiment consists in solving a system of linear equations.
Linear equations are usually integrated in Constraint Programming
using a specific global constraint propagator. The goal of the experiment
is therefore to assess the overhead of decomposing expressions using
the presented models compared to a decomposition which uses a special
purpose algorithm, i.e. a global constraint. 

Each system of linear equations is described by a tuple $\left\langle n,d,c,a,s|u\right\rangle $
where $n$ is the number of variables in the problem, $d$ is the
uniform domain size, $c$ is the number of linear equations, $a$
is the number of terms in each equation, and the last term denotes
if the problem is (s)atisfiable or (u)nsatisfiable. Each problem is
defined by

\[
\bigwedge_{i=1}^{c}\sum_{v\in p\left(i\right)}v=t
\]
where $p\left(i\right)$ is a function returning a combination of
$a$ variables for the equation $i$, selected randomly from the full
set of $C_{a}^{n}$ possible combinations. The independent term $t$
in each equation was selected randomly with a uniform probability
from the interval $\left[a\ldots a\times d\right]$. Different random
seeds were experimented in order to generate difficult instances. 

The $a$-ary sum constraints were decomposed using binary sums implementing
a subset of the previously described models, namely \noun{Vars}, \noun{SViews},
and \noun{PViews}. Model \noun{Vars+Global} used global constraint
propagators for the $a$-ary sums (in this case no auxiliary variables
are required).

\paragraph*{Systems of nonlinear equations.}

The second experiment considers systems of nonlinear equations. These
problems arise often in practice, and since the decomposition to special
purpose propagators is not so direct as in the previous case, it provides
a realistic opportunity to apply the previously discussed models. 

A system of nonlinear equations is described by a tuple $\left\langle n,d,c,a_{1},a_{2},s|u\right\rangle $
where $a_{1}$ is the number of terms in each equation, each term
is composed of a product of $a_{2}$ factors, and all remaining variables
have the same meaning as before. Each system of nonlinear equations
is formally defined as:

\[
\bigwedge_{i=1}^{c}\sum_{j=1}^{a_{1}}\prod_{v\in p\left(i,j\right)}v=t
\]
where $p\left(i,j\right)$ is a function returning a combination of
$a_{2}$ variables for the term $j$ of equation $i$, selected randomly
from the full set of $C_{a_{2}}^{n}$ possible combinations. 

The tested models consist of the decomposition into binary sums and
products using auxiliary variables exclusively (\noun{Vars}) and using
view models (\noun{SViews}, \noun{PViews}). We also tested two models
where each product is decomposed using either auxiliary variables
or views, projected to a variable $x_{i}$, and a sum propagator is
used to enforce $\sum_{i=1}^{a_{1}}x_{i}$ for each equation (\noun{Vars+Global}
and \noun{PViews+Global} respectively).

\paragraph*{Social golfers (prob10 in CSPLib).}

The Social golfers problem consists in scheduling a golf tournament.
The golf tournament lasts for a specified number of weeks $w$, organizing
$g$ games each week, each game involving $s$ players. There is therefore
a total of $g\times s$ players participating in the tournament. The
goal is to come up with a schedule where each pair of golfers plays
in the same group at most once.

This problem may be solved efficiently in Constraint Programming using
a 3-dimensional matrix $x$ of $w\times g\times s$ integer domain
variables, where each variable identifies a golf player in the tournament.
For two groups of players $G_{1}$, $G_{2}$, the \noun{meetOnce}
constraint ensures that any pair of players in one group does not
meet in the other group,
\begin{eqnarray*}
\noun{{meetOnce}}\left(G_{1},G_{2}\right) & = & \left[\sum_{x\in G_{1},y\in G_{1}}\left(x=y\right)\leq1\right]
\end{eqnarray*}
This constraint is then used to impose that each pair of players meets
at most once during the entire tournament,
\[
\bigwedge_{1\leq w_{i}<w_{j}\leq w}\noun{{meetOnce}}\left(\begin{array}{c}
\left\{ x_{w_{i},g_{i},s_{i}}:1\leq g_{i}\leq g,1\leq s_{i}\leq s\right\} ,\\
\left\{ x_{w_{j},g_{j},s_{j}}:1\leq g_{j}\leq g,1\leq s_{j}\leq s\right\} 
\end{array}\right)
\]

We tested a subset of our models for propagating the \noun{meetOnce}
constraint, namely \noun{PViews+Global}, \noun{SViews+Global}, and
\noun{Vars+Global}. View based models implement the \noun{meetOnce}
constraint using the above expression directly using a global propagator
for the sum constraint:
\begin{eqnarray*}
\noun{{meetOnce}}\left(G_{1},G_{2}\right) & = & \left[\noun{{sum}}\left(\noun{{all}}\left(x\in G_{1},y\in G_{2},x=y\right)\right)\leq1\right]
\end{eqnarray*}
The \noun{all} predicate is used for aggregating a set of expressions
from the instantiation of a template expression (e.g. $x=y$) over
a set of possible values (e.g. $x\in G_{1},y\in G_{2}$).

The \noun{Vars+Global} model implements the traditional decomposition
using a set $b$ of $s^{2}$ auxiliary boolean domain variables, 
\begin{eqnarray}
\noun{{meetOnce}}\left(G_{1},G_{2}\right) & = & \bigwedge_{x\in G_{1},y\in G_{2}}\left[b_{i}=\left(x=y\right)\right]\label{eq:golfers-meetonce-eq1}\\
 & \wedge & \left[\sum_{i=1}^{s^{2}}b_{i}\leq1\right]\label{eq:golfers-meetonce-eq2}
\end{eqnarray}
In this case we used the (reified) equality propagator for each equation
in the conjunction of eq.~\vref{eq:golfers-meetonce-eq1}, and a
sum global propagator for eq.~\vref{eq:golfers-meetonce-eq2}.

\paragraph*{Golomb ruler (prob6 in CSPLib).}

A Golomb ruler of $m$ marks and length $x_{m}$ is a set of $m$
integers, 
\[
0=x_{1}<x_{2}<\ldots<x_{m}
\]
such that the $m(m-1)/2$ differences $x_{i}-x_{j}$, $1\leq j<i\leq m$
are distinct. The Golomb ruler problem is an optimization problem,
where the goal is to find the smallest possible Golomb ruler with
a given number of marks.

This problem makes use of a constraint of the form
\[
\noun{{distinct}}\left(\left\{ x_{i}-x_{j}:1\leq j<i\leq m\right\} \right)
\]
enforcing that the pairwise differences $x_{i}-x_{j}$ are all distinct.
The classical decomposition for this constraint (\noun{Vars+Global})
introduces one auxiliary variable for each pairwise difference, and
makes use of the distinct global propagator for the \noun{distinct}
constraint,
\begin{eqnarray*}
 & \bigwedge_{1\leq i<j\leq m} & \left[a_{i,j}=x_{i}-x_{j}\right]\\
 & \wedge & \left[\noun{{distinct}}\left(a\right)\right]
\end{eqnarray*}

Using views avoids introducing the set of auxiliary variables $a$.
Instead, the constraint is used directly as follows,
\[
\noun{{distinct}}\left(\noun{{all}}\left(1\leq i\leq m,1\leq j\leq m,j<i,x_{i}-x_{j}\right)\right)
\]
and enforced with the bounds complete propagator introduced by \cite{Lopez-ortiz2003}.
In our benchmarks we solved the decision version of this problem,
i.e. we provided the size of the ruler $x_{m}$ as a parameter of
the problem, and asked for a ruler satisfying the constraints.

\paragraph*{Low autocorrelation binary sequences (prob5 in CSPLib).}

The goal is to construct a binary sequence $S=x_{1},\ldots,x_{n}$
of length $n$, where $D\left(x_{i}\right)=\left\{ -1,1\right\} $,
$1\leq i\leq n$, that minimizes the autocorrelations between bits,
i.e. that minimizes the following expression,
\begin{equation}
m=\sum_{i=1}^{n-1}\left(\sum_{j=1}^{n-i-1}x_{j}\times x_{j+i+1}\right)^{2}\label{eq:labs-m}
\end{equation}
The \noun{Vars+Global} model decomposes the above expression using
three sets of auxiliary variables $a$, $b$, and $c$, and sum constraints
as follows, 
\begin{eqnarray*}
 & \bigwedge_{i=1}^{n-1}\bigwedge_{j=1}^{n-i-1} & \left[x_{j}\times x_{j+i+1}=a_{i,j}\right]\\
 & \bigwedge_{i=1}^{n-1} & \left[\sum_{j=1}^{n-i-1}a_{i,j}=b_{i}\right]\\
 & \bigwedge_{i=1}^{n-1} & \left[b_{i}^{2}=c_{i}\right]\\
 & \wedge & \left[m=\sum_{i=1}^{n-1}c_{i}\right]
\end{eqnarray*}

The \noun{PViews, }and\noun{ SViews} models implements expression
\ref{eq:labs-m} directly, without introducing any auxiliary variable.

\paragraph*{Fixed-length error correcting codes (prob36 in CSPLib).}

This problem involves generating a set of strings from a given alphabet
which satisfy a pairwise minimum distance. Each instance is defined
by a tuple $\left\langle a,n,l,d\right\rangle $ where $a$ is the
alphabet size, $n$ is the number of strings, $l$ is the string length,
and $d$ is the minimum distance allowed between any two strings.
For measuring the distance between two strings we have used the Hamming
distance on two instances and the Lee distance on the other two. For
two arbitrary strings $x$, $y$ of length $l$, these measures are
defined as follows,
\begin{eqnarray}
\noun{{Hamming}}\left(x,y\right) & = & \sum_{i=1}^{l}\left(x_{i}\neq y_{i}\right)\nonumber \\
\noun{{Lee}}\left(x,y\right) & = & \sum_{i=1}^{l}\min\left(\left|x_{i}-y_{i}\right|,a-\left|x_{i}-y_{i}\right|\right)\label{eq:lee-distance}
\end{eqnarray}
This problem was modeled by a matrix $x$ of $n\times l$ integer
domain variables, where each variable $x_{i,j}$ can take a value
in $1\ldots a$ corresponding to the symbol of string $i$ in position
$j$. Then, distance constraints are imposed between each pair of
strings,
\[
\bigwedge_{1\leq i_{1}<i_{2}\leq n}\noun{{distance}}\left(\left\{ x_{i_{1},j}:1\leq j\leq l\right\} ,\left\{ x_{i_{2},j}:1\leq j\leq l\right\} \right)\geq d
\]
The \noun{Vars+Global} model decomposes distance constraints using
auxiliary variables and sum constraints. Note that in the case of
the Lee distance, a total of $4l$ auxiliary variables are introduced
for each distance constraint. The \noun{SViews}, and \noun{PViews}
models implement both distance functions without any extra variables.

\subsection{Setup}

The experiments were compiled with the gcc-4.5.3 C++ compiler and
executed on an Intel Core i7 @ 3.39GHz running Mac OS X 10.7.4. The
versions of the CaSPER and Gecode solvers were the most recently available
at the time of these experiments, respectively version 1.0.0rc2 and
version 3.7.3. Each benchmark was repeated ten times and then kept
repeating until the standard deviation of the runtime was below $2\%$
of the average time. The minimum runtime was then used. The source
code for all experiments can be made available upon request (please
email the first author).

\subsection{Discussion\label{sec:Discussion}}

The results of all benchmarks are summarized in tables \ref{tab:pviewsVsSviews}
to \ref{tab:pviewsVsGecode+global} (more detailed in \cite{Corr10})
from which we may draw the following conclusions.

\subsubsection{Type parametric views versus Subtype polymorphic views}

Recall from section \vref{sec:impl-strongly-typed} that solvers using
type parametric view objects are able to avoid a number of function
calls due to code inlining optimizations. Table \ref{tab:pviewsVsSviews}
shows how this optimization improves performance in practice. Overall
the speed-up of PViews wrt. SViews is $50\%$ (i.e. $1/0.67-1$).
In particular for problems involving a large number of subexpressions
the speed-up can reach up to $300\%$. Given that type parametric
views are consistently better, we choose to use this model exclusively
on the remaining experiments.

\begin{table}
\hfill{}%
\begin{tabular}{|c|c|c|c|c|}
\hline 
 & mean & stddev & min & max\tabularnewline
\hline 
\hline 
Systems of linear equations & 0.44 & 1.87 & 0.24 & 0.97\tabularnewline
\hline 
Systems of nonlinear equations & 0.74 & 1.02 & 0.72 & 0.75\tabularnewline
\hline 
Social golfers & 0.70 & 1.03 & 0.68 & 0.72\tabularnewline
\hline 
Golomb ruler & 0.98 & 1.02 & 0.95 & 1.00\tabularnewline
\hline 
Low autocorrelation binary sequences & 0.92 & 1.01 & 0.91 & 0.92\tabularnewline
\hline 
Fixed-length error correcting codes & 0.56 & 1.44 & 0.39 & 0.77\tabularnewline
\hline 
\hline 
All & 0.67 & 1.48 & 0.24 & 1.00\tabularnewline
\hline 
\end{tabular}\hfill{}

\caption{\label{tab:pviewsVsSviews}Geometric mean, standard deviation, minimum
and maximum of the ratios between the runtimes of the solver implementing
the \noun{PViews} model and the solver implementing the \noun{SViews}
model, on all benchmarks.}
\end{table}

\subsubsection{Auxiliary variables versus Type parametric views}

Table \ref{tab:varsVsViews} compares the runtime of the best model
using auxiliary variables, i.e. either \noun{Vars} or \noun{Vars+Global},
with the runtime of the best model that uses type parametric views,
i.e. either \noun{PViews} or \noun{PViews+Global}. View objects do
not intend to be a replacement for global constraint propagators,
and therefore this table shows how much the runtime of a constraint
program may be improved when using the best available tools. 

Before we take a global view on the results in this table, let us
focus on the special case of the benchmark involving systems of linear
equations. We recall that this benchmark should not be considered
as part of a realistic application of views or auxiliary variables
since it may be modeled using a global constraint propagator exclusively.
However, modeling the global sum constraint using type parametric
views over binary sums was only 10\% worse on average, which is nevertheless
remarkable.

For all other benchmarks, using type parametric views instead of auxiliary
variables was consistently better, approximately twice as fast on
(geometric) average, and always more than 33\% faster. An interesting
particular case are the two instances of the ``Fixed length error
correcting codes'' problem using the Lee distance. These instances
are in fact the only ones for which decomposing using auxiliary variables
could be recommended. This is because there is a subexpression which
occurs twice in the expression, and therefore can be represented by
the same auxiliary variable (see equation \ref{eq:lee-distance}),
possibly leading to a smaller search tree. Even without this optimization,
the solver using type parametric views was almost twice as fast on
average on these instances.

Regarding propagation, even if using auxiliary variables may sometimes
lead to smaller search trees (due to proposition \ref{prop:prop-strong-idemp-1}),
the difference was not significant in our experiments. In fact, for
those instances where using auxiliary variables increases propagation
strength compared to views, the discrepancy in the number of fails
was only of $6\%$ on average, and never more than $20\%$ (table
\ref{tab:FailsVarsVsViews}).

\begin{table}
\hfill{}%
\begin{tabular}{|c|c|c|c|c|}
\hline 
 & mean & stddev & min & max\tabularnewline
\hline 
\hline 
Systems of linear equations & 1.10 & 1.14 & 0.96 & 1.24\tabularnewline
\hline 
Systems of nonlinear equations & 0.38 & 1.08 & 0.34 & 0.41\tabularnewline
\hline 
Social golfers & 0.28 & 1.29 & 0.22 & 0.36\tabularnewline
\hline 
Golomb ruler & 0.75 & 1.07 & 0.70 & 0.80\tabularnewline
\hline 
Low autocorrelation binary sequences & 0.24 & 1.01 & 0.24 & 0.24\tabularnewline
\hline 
Fixed-length error correcting codes & 0.60 & 1.30 & 0.42 & 0.79\tabularnewline
\hline 
\hline 
All & 0.51 & 1.73 & 0.22 & 1.24\tabularnewline
\hline 
All except linear & 0.43 & 1.55 & 0.22 & 0.80\tabularnewline
\hline 
\end{tabular}\hfill{}

\caption{\label{tab:varsVsViews}Geometric mean, standard deviation, minimum
and maximum of the ratios between the runtimes of best performing
model using views and the best performing model using auxiliary variables,
on all benchmarks.}
\end{table}

\begin{table}
\hfill{}%
\begin{tabular}{|c|c|c|c|c|}
\hline 
 & mean & stddev & min & max\tabularnewline
\hline 
\hline 
Systems of nonlinear equations & 1.06 & 1.08 & 1.01 & 1.16\tabularnewline
\hline 
Golomb ruler & 1.00 & 1.00 & 1.00 & 1.01\tabularnewline
\hline 
Fixed-length error correcting codes & 1.20 & 1.00 & 1.20 & 1.20\tabularnewline
\hline 
\hline 
All & 1.06 & 1.08 & 1.00 & 1.20\tabularnewline
\hline 
\end{tabular}\hfill{}

\caption{\label{tab:FailsVarsVsViews}Geometric mean, standard deviation, minimum
and maximum of the ratios between the number of fails of the best
performing solver using views and the best performing solver using
auxiliary variables, on all instances of each problem where the number
of fails differ.}
\end{table}

\subsubsection{Competitiveness}

Modeling decomposable constraints using type parametric views makes
CaSPER competitive with the state-of-the-art Gecode solver. This may
be seen by comparing the results presented in table \ref{tab:globalVsGecode+global}
and table \ref{tab:pviewsVsGecode+global}. In the first table we
compare the runtimes obtained by running the same model on both solvers,
i.e. \noun{Vars+Global} and \noun{Gecode-Vars+Global}. The second
table compares the \noun{PViews} model against \noun{Gecode-Vars+Global,
}which are the\noun{ }best models that can be implemented in both
platforms using the available modeling primitives. While CaSPER is
worse in all but one problem when using auxiliary variables and global
propagators, it becomes faster when using type parametric views. We
believe that the discrepancy observed in the ``Fixed length error
correcting codes'' benchmark is related to aspects of the architecture
of both solvers which are orthogonal to the tested models.

\begin{table}
\hfill{}%
\begin{tabular}{|c|c|c|c|c|}
\hline 
 & mean & stddev & min & max\tabularnewline
\hline 
\hline 
Systems of linear equations & 1.23 & 1.41 & 0.73 & 1.50\tabularnewline
\hline 
Systems of nonlinear equations & 2.06 & 1.06 & 1.90 & 2.16\tabularnewline
\hline 
Social golfers & 2.00 & 1.28 & 1.51 & 2.38\tabularnewline
\hline 
Golomb ruler & 1.68 & 1.29 & 1.25 & 2.01\tabularnewline
\hline 
Low autocorrelation binary sequences & 1.84 & 1.01 & 1.83 & 1.85\tabularnewline
\hline 
Fixed-length error correcting codes & 0.11 & 2.23 & 0.04 & 0.21\tabularnewline
\hline 
\hline 
All & 1.01 & 3.30 & 0.04 & 2.38\tabularnewline
\hline 
All except fixed-length error correcting codes & 1.72 & 1.33 & 0.73 & 2.38\tabularnewline
\hline 
\end{tabular}\hfill{}

\caption{\label{tab:globalVsGecode+global}Geometric mean, standard deviation,
minimum and maximum of the ratios between the runtimes of the CaSPER
solver implementing the \noun{Vars+Global} model and the Gecode solver
implementing the \noun{Vars+Global} model, on all benchmarks.}
\end{table}

\begin{table}
\hfill{}%
\begin{tabular}{|c|c|c|c|c|}
\hline 
 & mean & stddev & min & max\tabularnewline
\hline 
\hline 
Systems of linear equations & 1.35 & 1.56 & 0.71 & 1.86\tabularnewline
\hline 
Systems of nonlinear equations & 0.78 & 1.08 & 0.72 & 0.86\tabularnewline
\hline 
Social golfers & 0.57 & 1.42 & 0.45 & 0.85\tabularnewline
\hline 
Golomb ruler & 1.26 & 1.24 & 1.00 & 1.53\tabularnewline
\hline 
Low autocorrelation binary sequences & 0.44 & 1.02 & 0.44 & 0.45\tabularnewline
\hline 
Fixed-length error correcting codes & 0.06 & 2.26 & 0.03 & 0.13\tabularnewline
\hline 
\hline 
All & 0.52 & 3.23 & 0.03 & 1.86\tabularnewline
\hline 
All except fixed-length error correcting codes & 0.85 & 1.61 & 0.44 & 1.86\tabularnewline
\hline 
\end{tabular}\hfill{}

\caption{\label{tab:pviewsVsGecode+global}Geometric mean, standard deviation,
minimum and maximum of the ratios between the runtimes of the CaSPER
solver implementing the \noun{PViews} model and the Gecode solver
implementing the \noun{Vars+Global} model, on all benchmarks.}
\end{table}

In summary, we have seen how box view objects may be implemented using
several language paradigms, with a focus on strongly typed languages,
namely C++. Decomposition models based on auxiliary variables, subtype
polymorphic views, and type parametric views were implemented for
a number of well known benchmarks, and the results were discussed.
We observed that type parametric views are clearly more efficient
than models resulting from the other decomposition/compilation methods
for all benchmarks. Moreover, we have seen that this technique improves
the performance of CaSPER to the point of being competitive with Gecode,
which is regarded as one of the most efficient solvers available.

\selectlanguage{british}%

\subsubsection{\label{sub:Completeness-revisited}Monitoring Execution}

The theoretical discussion in the previous section hints at the reasons
why the view models, in particular the \noun{PViews} model, achieve
better performance than the \noun{Vars} model. We checked whether
these hints were confirmed in the experiments described earlier. To
do so, we monitored the execution of many instances of the problems
and consistently obtained results similar to those reported below.
Table \ref{tab:monitor-1} shows the results obtained with two instances
of the problem involving systems of nonlinear equations (see \ref{sub:Problems}).
First, we note that in the first (satisfiable) instance, the same
search trees were explored by all models, whereas in the second (unsatisfiable)
the \noun{Vars} model explores a slightly smaller tree (1\% less failures).
As expected, the better performance of both view models (about 2 times
faster) is due to the lower number of propagator executions as well
as domain updates (one order of magnitude less than in the \noun{Vars}
model). Moreover, \noun{PViews} improves (20-30 \%) on the \noun{SViews}
model because of its better inlining, although the compiler is not
able to inline all the function calls. 

\begin{table}
\hfill{}%
\begin{tabular}{|>{\raggedright}m{1cm}|c|c|c|c|c|c|c|}
\hline 
\multicolumn{2}{|c|}{Non-Linear} & p & t & f & u & calls & op\tabularnewline
\hline 
\hline 
\multirow{3}{1cm}{20-20-10-4-2-s} & \noun{Vars} & 1.9E+8 & 15.7 & 651,189 & 1.2E+8 & 0.0E+0 & 1.1E+9\tabularnewline
\cline{2-8} 
 & \noun{SViews} & 2.4E+7 & 8.8 & 651,189 & 2.0E+7 & 1.5E+9 & 1.4E+9\tabularnewline
\cline{2-8} 
 & \noun{PViews} & 2.4E+7 & 6.4 & 651,189 & 2.0E+7 & 9.7E+8 & 1.4E+9\tabularnewline
\hline 
\multirow{3}{1cm}{50-5-20-4-3-u} & \noun{Vars} & 1.6E+8 & 19.3 & 342,311 & 1.0E+8 & 0.0E+0 & 9.6E+8\tabularnewline
\cline{2-8} 
 & \noun{SViews} & 1.2E+7 & 8.9 & 346,762 & 1.0E+7 & 1.5E+9 & 1.6E+9\tabularnewline
\cline{2-8} 
 & \noun{PViews} & 1.2E+7 & 6.5 & 346,762 & 1.0E+7 & 5.2E+8 & 1.6E+9\tabularnewline
\hline 
\end{tabular}\hfill{}

\caption{\label{tab:monitor-1}Number of propagator executions (p), seconds
(t), fails (f), domain updates (u), calls to view object methods (calls),
and arithmetic operations (op) when solving two instances of the systems
of nonlinear equations problem. }
\end{table}

A similar picture is obtained with three instances of the Golomb problem
presented in table \ref{tab:monitor-2}. Again, despite exploring
a slightly smaller search space (<1\% less failures), the performance
of the \noun{Vars} model is penalized (about 33\% slow down) by the
larger number of propagator executions as well as domain updates,
both again one order of magnitude higher than in the view models.
Within these, \noun{PViews} performs slightly better due to function
inlining  (in this case, there are fewer function calls than the previous
example even if all function calls were inlined).

\begin{table}
\hfill{}%
\begin{tabular}{|c|c|c|c|c|c|c|c|}
\hline 
\multicolumn{2}{|c|}{Golomb} & p & t & f & u & calls & op\tabularnewline
\hline 
\hline 
\multirow{3}{*}{10} & \noun{Vars} & 3.64E+5 & 0.05 & 1,703 & 2.2E+5 & 0 & 1.9E+6\tabularnewline
\cline{2-8} 
 & \noun{SViews} & 3.72E+4 & 0.04 & 1,707 & 3.0E+4 & 2.3E+5 & 1.7E+6\tabularnewline
\cline{2-8} 
 & \noun{PViews} & 3.72E+4 & 0.04 & 1,707 & 3.0E+4 & 0 & 1.7E+6\tabularnewline
\hline 
\multirow{3}{*}{11} & \noun{Vars} & 2.06E+6 & 0.3 & 7,007 & 1.3E+6 & 0 & 1.1E+7\tabularnewline
\cline{2-8} 
 & \noun{SViews} & 1.84E+5 & 0.22 & 7,063 & 1.5E+5 & 1.2E+6 & 1.1E+7\tabularnewline
\cline{2-8} 
 & \noun{PViews} & 1.84E+5 & 0.21 & 7,063 & 1.5E+5 & 0 & 1.1E+7\tabularnewline
\hline 
\multirow{3}{*}{12} & \noun{Vars} & 1.14E+8 & 17.7 & 283,156 & 6.8E+7 & 0 & 6.1E+8\tabularnewline
\cline{2-8} 
 & \noun{SViews} & 9.61E+6 & 13.8 & 284,301 & 8.3E+6 & 5.9E+7 & 6.6E+8\tabularnewline
\cline{2-8} 
 & \noun{PViews} & 9.61E+6 & 13.4 & 284,301 & 8.3E+6 & 0 & 6.6E+8\tabularnewline
\hline 
\end{tabular}\hfill{}

\caption{\label{tab:monitor-2}Number of propagator executions (p), seconds
(t), fails (f), domain updates (u), calls to view object methods (calls),
and arithmetic operations (op) when solving three instances of the
Golomb problem. }
\end{table}
\selectlanguage{british}%

\section{\label{sec:Conclusion-and-Future}Conclusion and Future Work}

In this paper we addressed the modeling of decomposable constraints
and challenged the traditional view that such decomposition is best
made through the use of auxiliary variables. We showed that the propagation
of such decomposable constraints may be performed by view based propagators
that do not require such auxiliary variables, and discussed the properties
of several approximations of this approach. In particular, we focused
on models using box view-based propagators for pragmatic reasons,
and showed that, notwithstanding the fact that asymptotic worst-case
analysis leads to the same complexity, when applied to a comprehensive
set of benchmarks they perform significantly better than those based
on auxiliary variables, even when the latter models use stronger propagators. 

We have intentionally restricted the instantiation of view models
to those consisting only of box approximations, largely because box
view objects may be implemented efficiently. In fact, other view models
may be propagated using a similar approach, but creating efficient
corresponding view objects is much more challenging in general. As
an example, consider designing a domain view object, that is an object
which propagates constraints of the form $c\circ f_{i}\dots\circ f_{m}$
and computes a $\delta\delta$-approximation for every image and object
function of each of the functions $f_{i}$. Such object must thus
be able to compute $\delta$-domains, which cannot be done in constant
time for most functions $f_{i}$ as it is the case for $\beta$-domains.
But it will be interesting to check whether this could be done for
specific classes of functions, similarly to what was done in \cite{Tack2009}. 

Finally, given the superior performance of \noun{PViews} compared
with \noun{SViews}, one may question what to do when the former cannot
be used directly, as occurs when the problem at hand is specified
in an interpreted environment such as MiniZinc \cite{Nethercote2007}
where there is no C++ compilation involved. In this case, an option
is to generate C++ code for the constraints and compile it with a
JIT (Just In Time) C++ compiler. Pre-processing constraint problems
for generating specific solver binaries is shown in \cite{Balasubramaniam:2012:AAG:2337223.2337301}.
It would be interesting to see if the compilation time would compensate
the solving time in our case.

\bibliographystyle{abbrv}
\addcontentsline{toc}{chapter}{\bibname}\bibliography{bibliography}
\selectlanguage{british}%

\end{document}